\newtheorem{theorem}{Theorem}
\newtheorem{lemma}[theorem]{Lemma}
\newtheorem*{theorem*}{Theorem}
\newtheorem*{proposition*}{Proposition}
\newtheorem*{lemma*}{Lemma}
\newtheorem*{corollary*}{Corollary}
\newtheorem*{claim*}{Claim}
\newcommand{\ip}[2]{\langle#1,#2\rangle}
\newcommand{\bx}{\mathbf{x}}
\newcommand{\bq}{\mathbf{q}}
\newcommand{\by}{\mathbf{y}}
\newcommand{\bv}{\mathbf{v}}
\newcommand{\bw}{\mathbf{w}}
\newcommand{\ba}{\mathbf{a}}
\newcommand{\bb}{\mathbf{b}}
\newcommand{\bz}{\mathbf{z}}
\newcommand{\bu}{\mathbf{u}}
\newcommand{\bs}{\mathbf{s}}
\newcommand{\bg}{\mathbf{g}}
\newcommand{\dist}{\text{dist}}
\newcommand{\Qt}{Q_{\tau}}
\newcommand{\Qtt}{Q_{\tau+1}}
\newcommand{\Ftt}{F_{\tau+1}}
\newcommand{\Rtt}{R_{\tau+1}}
\newcommand{\tv}{\tilde{\bv}}
\newcommand{\N}{\mathcal{N}}
\newcommand{\tr}{\mathop{\mathrm{Tr}}}
\title{Memory Limited, Streaming PCA}
\author{Ioannis Mitliagkas$^t$, Constantine Caramanis$^t$, Prateek Jain$^m$ \thanks{Email: \{ioannis,constantine\}@utexas.edu;prajain@microsoft.com}
\vspace{0.25in} \\ $^t$ The University of Texas at Austin  \vspace{0.1in} \\ $^m$ Microsoft Research India, Bangalore}
\begin{document}

\maketitle

\begin{abstract} 

We consider streaming, one-pass principal component analysis (PCA), in the high-dimensional regime, with limited memory. Here, $p$-dimensional samples are presented sequentially, and the goal is to produce the $k$-dimensional subspace that best approximates these points. Standard algorithms require $O(p^2)$ memory; meanwhile no algorithm can do better than $O(kp)$ memory, since this is what the output itself requires. Memory (or storage) complexity is most meaningful when understood in the context of computational and sample complexity. Sample complexity for high-dimensional PCA is typically studied in the setting of the {\em spiked covariance model}, where $p$-dimensional points are generated from a population covariance equal to the identity (white noise) plus a low-dimensional perturbation (the spike) which is the signal to be recovered. It is now well-understood that the spike can be recovered when the number of samples, $n$, scales proportionally with the dimension, $p$. Yet, all algorithms that provably achieve this, have memory complexity $O(p^2)$. Meanwhile, algorithms with memory-complexity $O(kp)$ do not have provable bounds on sample complexity comparable to $p$. We present an algorithm that achieves both: it uses $O(kp)$ memory (meaning storage of any kind) and is able to compute the $k$-dimensional spike with $O(p \log p)$ sample-complexity -- the first algorithm of its kind. While our theoretical analysis focuses on the spiked covariance model, our simulations show that our algorithm is successful on much more general models for the data.

\end{abstract}

\section{Introduction}
Principal component analysis is a fundamental tool for dimensionality reduction, clustering, classification, and many more learning tasks. It is a basic preprocessing step for learning, recognition, and estimation procedures. The core computational element of PCA is performing a (partial) singular value decomposition, and much work over the last half century has focused on efficient algorithms (e.g., \cite{golub_matrix_2012} and references therein) and hence on {\em computational complexity}. 


The recent focus on understanding high-dimensional data, where the dimensionality of the data scales together with the number of available sample points, has led to an exploration of the {\it sample complexity} of covariance estimation. This direction was largely influenced by Johnstone's {\em spiked covariance model}, where data samples are drawn from a distribution whose (population) covariance is a low-rank perturbation of the identity matrix \cite{johnstone_distribution_2001}. Work initiated there, and also work done in \cite{vershynin_how_2010} (and references therein) has explored the power of batch PCA in the $p$-dimensional setting with sub-Gaussian noise, and demonstrated that the singular value decomposition (SVD) of the empirical covariance matrix succeeds in recovering the principal components (extreme eigenvectors of the population covariance) with high probability, given $n = O(p)$ samples. 

This paper brings the focus on another critical quantity: memory/storage. This is relevant in the so-called {\em streaming data model}, where the samples $\mathbf{x}_t \in \mathbb{R}^p$ are collected sequentially, and unless we store them, they are irretrievably gone.\footnote{This is similar to what is sometimes referred to as the {\em single pass} model.} The only currently available algorithms with provable sample complexity guarantees either store all $n = O(p)$ samples (note that for more than a single pass over the data, the samples must all be stored) or explicitly form the empirical $p \times p$ (typically dense) covariance matrix. Either case requires at least $O(p^2)$ storage. Despite the availability of massive local and distributed storage systems, for high-dimensional applications (e.g., where data points are high resolution photographs, biometrics, video, etc.), $p$ could be on the order of $10^{10} - 10^{12}$, making $O(p^2)$ prohibitive, if not in fact impossible to manage. Indeed, at multiple computing scales, manipulating vectors of length $O(p)$ is possible, when storage of $O(p^2)$ is not. A typical desktop may have 10-20 GB of RAM, but will not have more than a few TB of total storage. A modern smart-phone may have as much as a GB of RAM, but has a few GB, not TB, of storage.


We consider the {\em streaming data} setting, where data points are generated sequentially, and are never stored. In the setting of the so-called spiked covariance model (and natural generalizations) we show that a simple algorithm requiring $O(kp)$ storage -- the best possible -- performs as well as batch algorithms (namely, SVD on the empirical covariance matrix), with sample complexity $O(p \log p)$. To the best of our knowledge, this is the only algorithm with both storage complexity and sample complexity guarantees. We discuss the connection to past work in more detail in Section \ref{sec:related}. We introduce the model with all related details in Section \ref{sec:formulation}, and present the solution to the rank 1 case, the rank $k$ case, and the perturbed-rank-$k$ case in Sections \ref{sec:rank1}, \ref{sec:rankk} and \ref{sec:robust}, respectively. In Section \ref{sec:experiments} we provide simulations that not only confirm the theoretical results, but demonstrate that our algorithm works well outside the assumptions of our main theorems.

\section{Related Work}
\label{sec:related}
Memory- and computation-efficient algorithms that operate on streaming data are plentiful in the literature and many seem to do well in practice. However, there is no algorithm that provably recovers the principal components in the same noise and sample-complexity regime as the batch PCA algorithm does \emph{and} maintains a provably light memory footprint. Because of the practical relevance, there has been renewed interest recently in this problem, and the fact that this is an important unresolved issue has been pointed out in numerous places, e.g., \cite{warmuth_randomized_2008,arora_stochastic_2012}. 

A large body of work has focused on the non-statistical data paradigm that deals with a fixed pool of samples. This includes work on online PCA and low-rank matrix approximation in the streaming scenario, including sketching and dimensionality-reduction based techniques.

Online-PCA for {\em regret minimization} has been considered in several papers, most recently in \cite{warmuth_randomized_2008}, where the multiplicative weights approach is adapted for this problem (now experts correspond to subspaces). The goal there is to control the regret, improving on the natural follow-the-leader algorithm that performs batch-PCA at each step. However, the algorithm can require $O(p^2)$ memory, in order to store the multiplicative weights. A memory-light variant described in \cite{arora_stochastic_2012} typically requires much less memory, but there are no guarantees for this, and moreover, for certain problem instances, its memory requirement is on the order of $p^2$. 

Sub-sampling, dimensionality-reduction and sketching form another family of low-complexity and low-memory techniques, see, e.g., \cite{clarkson_numerical_2009,nadler_finite_2008,halko_finding_2011}. These save on memory and computation by performing SVD on the resulting smaller matrix. The results in this line of work provide worst-case guarantees over the pool of data, and typically require a rapidly decaying spectrum (which we do not have in our setting) to produce good bounds. More fundamentally, these approaches are not appropriate for data coming from a statistical model such as the spiked covariance model. It is clear that subsampling approaches, for instance, simply correspond to discarding most of the data, and for fundamental sample complexity reasons, cannot work. Sketching produces a similar effect: each column of the sketch is a random ($+/-$) sum of the data points. If the data points are, e.g., independent Gaussian vectors, then so will each element of the sketch, and thus this approach again runs against fundamental sample complexity constraints. Indeed, it is straightforward to check that the guarantees presented in (\cite{clarkson_numerical_2009,halko_finding_2011}) are not strong enough to guarantee recovery of the spike. This is not because the results are weak; it is because they geared towards worst-case bounds.

Algorithms focused on sequential SVD (e.g., \cite{brand_incremental_2002,brand_fast_2006}, \cite{comon_tracking_1990},\cite{li_incremental_2004} and more recently \cite{balzano_online_2010,he_online_2011}) seek to have the best subspace estimate at every time (i.e., each time a new data sample arrives) but without performing full-blown SVD at each step. While these algorithms indeed reduce both the computational and memory burden of batch-PCA, there are no  rigorous guarantees on the quality of the principal components or on the {\em statistical performance} of these methods.

%
%

In a Bayesian mindset, some researchers have come up with expectation maximization approaches \cite{roweis_em_1998,tipping_probabilistic_1999}, that can be used in an incremental fashion. The finite sample behavior is not known. 

Stochastic-approximation-based algorithms along the lines of \cite{robbins_stochastic_1951} are also quite popular, because of their low computational and memory complexity, and excellent performance in practice. They go under a variety of names, including {\em Incremental PCA} (though the term {\em Incremental} has been used in the online setting as well \cite{herbster_tracking_2001}), Hebbian learning, and stochastic power method \cite{arora_stochastic_2012}. The basic algorithms are some version of the following: upon receiving data point $\mathbf{x}_t$ at time $t$, update the estimate of the top $k$ principal components via:
\begin{equation}
\label{eq:SA}
U^{(t+1)} = {\rm Proj}(U^{(t)} + \eta_t \mathbf{x}_t \mathbf{x}_t^{\top} U^{(t)}),
\end{equation}
where ${\rm Proj(\cdot)}$ denotes the ``projection'' that takes the SVD of the argument, and sets the top $k$ singular values to $1$ and the rest to zero (see \cite{arora_stochastic_2012} for further discussion). 

While empirically these algorithms perform well, to the best of our knowledge - and efforts - there does not exist any rigorous finite sample guarantee for these algorithms. The analytical challenge seems to be the high variance at each step, which makes direct analysis difficult.

In summary, while much work has focused on memory-constrained PCA, there has as of yet been no work that simultaneously provides sample complexity guarantees competitive with batch algorithms, and also memory/storage complexity guarantees close to the minimal requirement of $O(kp)$ -- the memory required to store only the output. We present an algorithm that provably does both.

\section{Problem Formulation and Notation}
\label{sec:formulation}
We consider a streaming model, where at each time step $t$, we receive a point $\mathbf{x}_t \in \mathbb{R}^p$. Furthermore, any vector that is not explicitly stored can never be revisited. 
 Now, our goal is to compute the top $k$ principal components of the data: the $k$-dimensional subspace that offers the best squared-error estimate for the points. We assume a probabilistic generative model, from which the data is sampled at each step $t$. Specifically, we assume, 
\begin{equation}
  \label{eq:model}
  \bx_t=A \bz_t+\bw_t,
\end{equation}
where $A\in \mathbb{R}^{p\times k}$ is a fixed matrix, $\bz_t\in \mathbb{R}^{k\times 1}$ is a multivariate normal random variable, i.e., 
$$
\bz_t \sim \mathcal{N}(0_{k\times 1}, I_{k\times k}),
$$ 
and vector $\bw_t\in \mathbb{R}^{p\times 1}$ is the ``noise'' vector and is also sampled from a multivariate normal distribution, i.e., 
$$
\mathbf{w}_t\sim \mathcal{N}(0_{p\times 1}, \sigma^2 I_{p\times p}).
$$
Furthermore, we assume that all $2n$ random vectors ($\bz_t, \bw_t, \forall 1\leq t\leq n$) are mutually independent. 

 In this regime, it is well-known that batch-PCA is asymptotically consistent (hence recovering $A$ up to unitary transformations) with number of samples scaling as $n = O(p)$ \cite{vershynin_introduction_2010}. It is interesting to note that in this high-dimensional regime, the signal-to-noise ratio quickly approaches zero, as the signal, or ``elongation'' of the major axis, $\|Az\|_2$, is $O(1)$, while the noise magnitude, $\|\mathbf{w}\|_2$, scales as $O(\sqrt{p})$. The central goal of this paper is to provide finite sample guarantees for a streaming algorithm that requires memory no more than $O(k p)$ and matches the consistency results of batch PCA in the sampling regime $n = O(p)$ (possibly with additional log factors, or factors depending on $\sigma$ and $k$). 

We denote matrices by capital letters (e.g. $A$) and vectors by lower-case bold-face letters ($\bx$). $\|\bx\|_q$ denotes the $\ell_q$ norm of $\bx$; $\|\bx\|$ denotes the $\ell_2$ norm of $\bx$. $\|A\|$ or $\|A\|_2$ denotes the spectral norm of $A$ while $\|A\|_F$ denotes the Frobenius norm of $A$. Without loss of generality (WLOG), we assume that: $\|A\|_2=1$, where $\|A\|_2=\max_{\|\bx\|_2=1}\|A\bx\|_2$ denotes the spectral norm of $A$. Finally, we write $\ip{\ba}{\bb}=\ba^{\top}\bb$ for the inner product between $\ba$, $\bb$. In proofs the constant $C$ is used loosely and its value may vary from line to line.

\section{Algorithm and Guarantees}
In this section, we present our proposed algorithm and its finite sample analysis. It is a block-wise stochastic variant of the classical power-method. Stochastic versions of the power method are already popular in the literature and are known to have good empirical performance; see \cite{arora_stochastic_2012} for a nice review of such methods. However, the main impediment to the analysis of such stochastic  algorithms (as in \eqref{eq:SA}) is the potentially large variance of each step, due primarily to the high-dimensional regime we consider, and the vanishing SNR. 

This motivated us to consider a modified stochastic power method algorithm, that has a variance reduction step built in. At a high level, our method updates only once in a ``block'' and within one block we average out noise to reduce the variance. 

Below, we first illustrate the main ideas of our method as well as our sample complexity proof for the simpler rank-$1$ case. The rank-$1$ and rank-$k$ algorithms are so similar, that we present them in the same panel. We provide the rank-$k$ analysis in Section~\ref{sec:rankk}.  We note that, while our algorithm describes $\{\mathbf{x}_1,\dots,\mathbf{x}_n\}$ as ``input,'' we mean this in the streaming sense: the data are no-where stored, and can never be revisited unless the algorithm explicitly stores them.

\subsection{Rank-One Case}
\label{sec:rank1}
\begin{algorithm}[t]
\caption{Block-Stochastic Power Method \hfill Block-Stochastic Orthogonal Iteration}\label{algo:rank1}
  \begin{algorithmic}[1]
    \INPUT $\{\bx_1, \dots, \bx_n\}$, Block size: $B$ 
    \STATE $\bq_0\sim \mathcal{N}(0, I_{p\times p})$ (Initialization) \hfill $H^i \sim \mathcal{N}(0, I_{p\times p}), 1\leq i\leq k$ (Initialization)\label{stp:init1}
    \STATE $\bq_0 \leftarrow \bq_0/\|\bq_0\|_2$ \hspace{5.22cm} $H \leftarrow Q_0 R_0$ (QR-decomposition)
    \FOR{$\tau=0,\dots, n/B-1$}
    \STATE $\bs_{\tau+1}\leftarrow 0$ \hspace{5.9cm} $S_{\tau+1}\leftarrow 0$
    \FOR{$t= B\tau+1, \dots, B (\tau+1)$}
      \STATE $\bs_{\tau+1} \leftarrow \bs_{\tau+1}+\frac{1}{B}\ip{\bq_\tau}{\bx_t}\bx_t$ \hspace{2.35cm} $S_{\tau+1} \leftarrow S_{\tau+1}+\frac{1}{B}\bx_t \bx_t^{\top}Q_\tau$
    \ENDFOR
    \STATE $\bq_{\tau+1}\leftarrow \bs_{\tau+1}/\|\bs_{\tau+1}\|_2$ \hspace{3.65cm}  $S_{\tau+1}=Q_{\tau+1}R_{\tau+1}$ (QR-decomposition)
    \ENDFOR
    \OUTPUT 
  \end{algorithmic}
\end{algorithm}
We first consider the rank-$1$ case for which each sample $\bx_t$ is generated using: $\bx_t=\bu \bz_t + \bw_t$ where $\bu\in \mathbb{R}^p$ is the principal component that we wish to recover. Our algorithm is a block-wise method where all the $n$ samples are divided in $n/B$ blocks (for simplicity we assume that $n/B$ is an integer). In the $(\tau+1)$-st block, we compute
\begin{equation}\label{eq:sum1}
\bs_{\tau+1}=\left(\frac{1}{B}\sum_{t=B\tau+1}^{B(\tau+1)}\bx_t\bx_t^{\top}\right)\bq_\tau.
\end{equation}   
Then, the iterate $\bq_\tau$ is updated using $\bq_{\tau+1}=\bs_{\tau+1}/\|\bs_{\tau+1}\|_2$. Note that, $\bs_{\tau+1}$ can be easily computed in an online manner where $O(p)$ operations are required per step. Furthermore, storage requirements are also linear in $p$.
\subsubsection{Analysis}
We now present the sample complexity analysis of our proposed method (Algorithm~\ref{algo:rank1}). We show that, using $O(\sigma^4p\log(p)/\epsilon^2)$ samples, Algorithm~\ref{algo:rank1} obtains a solution $\bq_T$ of accuracy $\epsilon$, i.e.\ $\|\bq_T-\bu\|_2\leq \epsilon$.
\begin{theorem}
	Denote the data stream by $\bx_1, \dots, \bx_n$, where $\bx_t\in \mathbb{R}^p, \forall t$ is generated by \eqref{eq:model}. Set the total number of iterations $T=\Omega(\frac{\log(p/\epsilon)}{\log((\sigma^2+.75)/(\sigma^2+.5))})$ and the block size $B=\Omega(\frac{(1+3(\sigma+\sigma^2)\sqrt{p})^2\log(T)}{\epsilon^2})$. Then, with probability $0.99$, $\|\bq_T-\bu\|_2\leq \epsilon$, where $\bq_T$ is the $T$-th iterate of Algorithm~\ref{algo:rank1}. That is, Algorithm~\ref{algo:rank1} obtains an $\epsilon$-accurate solution  with number of samples ($n$) given by: $$n=\tilde{\Omega}\left(\frac{(1+3(\sigma+\sigma^2)\sqrt{p})^2\log(p/\epsilon)}{\epsilon^2\log((\sigma^2+.75)/(\sigma^2+.5))}\right).\vspace*{-5pt}$$
	Note that in the total sample complexity, we use the notation $\tilde{\Omega}(\cdot)$ to suppress the extra $\log(T)$ factor for clarity of exposition, as $T$ already appears in the expression linearly.
\label{thm:rank1}
\end{theorem}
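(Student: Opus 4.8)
The plan is to track the evolution of a single scalar ``potential'' measuring the alignment between the iterate $\bq_\tau$ and the target $\bu$, namely $\ip{\bq_\tau}{\bu}^2$ (equivalently, the sine of the angle, since $\|\bu\|_2 = \|A\|_2 = 1$). Writing the block average as $\widehat{M}_{\tau+1} = \frac{1}{B}\sum_{t=B\tau+1}^{B(\tau+1)} \bx_t\bx_t^\top$, we have $\bs_{\tau+1} = \widehat{M}_{\tau+1}\bq_\tau$, and the population version is $M = \bu\bu^\top + \sigma^2 I$, whose top eigenvector is $\bu$ with eigenvalue $1+\sigma^2$ and whose remaining spectrum equals $\sigma^2$. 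If we had exact access to $M$, one step of the power method would multiply the tangent of the angle by the eigenvalue ratio $\sigma^2/(1+\sigma^2)$; iterating $T$ times contracts the initial angle geometrically, which is exactly the source of the $\log(p/\epsilon)/\log((\sigma^2+.75)/(\sigma^2+.5))$ factor in $T$ (the constants $.75$ and $.5$ arise after absorbing the perturbation into the gap). So the first step is to write down this clean noiseless recursion and the contraction it yields.

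The second step is the perturbation argument: $\bs_{\tau+1} = M\bq_\tau + (\widehat{M}_{\tau+1} - M)\bq_\tau =: M\bq_\tau + \bg_{\tau+1}$, and I would bound $\|\bg_{\tau+1}\|_2 = \|(\widehat{M}_{\tau+1}-M)\bq_\tau\|_2$ with high probability. Here I would not use the naive spectral-norm bound $\|\widehat{M}_{\tau+1}-M\|_2$ (which is $\Theta(1)$ and too weak), but rather exploit that $\bq_\tau$ is a \emph{fixed} unit vector independent of the $t$-th block — this is the payoff of the block structure — so $(\widehat{M}_{\tau+1}-M)\bq_\tau$ is an average of $B$ i.i.d.\ mean-zero vectors. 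Each summand $\bx_t\bx_t^\top\bq_\tau$ has $\ell_2$-norm governed by $|\ip{\bx_t}{\bq_\tau}|\cdot\|\bx_t\|_2$, where $\|\bx_t\|_2 = O((1+\sigma)\sqrt p)$ and $\ip{\bx_t}{\bq_\tau}$ is sub-Gaussian of scale $O(1+\sigma)$; a vector Bernstein / Chernoff bound then gives $\|\bg_{\tau+1}\|_2 \lesssim \frac{(1+3(\sigma+\sigma^2)\sqrt p)\sqrt{\log(T/\delta)}}{\sqrt B}$ per block, and choosing $B$ as in the theorem makes this at most $\epsilon'$ for a suitably small $\epsilon'$. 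A union bound over the $T$ blocks supplies the $\log T$ inside the block size.

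The third step combines the two: one shows that if the angle error at step $\tau$ is $\theta_\tau$ and the additive noise is bounded by $\epsilon'$, then $\tan\theta_{\tau+1} \le \frac{\sigma^2}{1+\sigma^2}\tan\theta_\tau + \frac{C\epsilon'}{(1+\sigma^2)\cos\theta_\tau}$ or a similar affine-in-noise recursion; this is a standard ``noisy power method'' lemma. Unrolling the recursion, the homogeneous part decays geometrically to below $\epsilon/2$ after $T$ steps, and the accumulated noise is a geometric series summing to $O(\epsilon')$, which we set to $\epsilon/2$ by the choice of $B$. A separate, easy step handles the initialization: a random Gaussian $\bq_0$ has $|\ip{\bq_0}{\bu}| \ge c/\sqrt p$ with constant probability, so $\tan\theta_0 = O(\sqrt p)$, and the $\log p$ in $T$ is what is needed to contract this down; I would fold the requisite constant-probability event and the per-step failure probabilities into the final $0.99$ via a union bound. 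Finally, multiplying $n = BT$ yields the stated sample complexity.

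The main obstacle I anticipate is the second step — getting the \emph{right} (dimension-optimal) tail bound on $\|(\widehat{M}_{\tau+1}-M)\bq_\tau\|_2$. The subtlety is that the summands $\bx_t\bx_t^\top\bq_\tau$ are heavy-tailed relative to their variance (products of Gaussians), so a clean sub-exponential Bernstein bound requires carefully separating the ``signal'' coordinate $\ip{\bx_t}{\bq_\tau}\bu$-type contribution from the bulk noise and controlling $\|\bw_t\|_2^2$ concentration; keeping the $\sqrt p$ (rather than $p$) dependence here is precisely what distinguishes this analysis from a naive one and is what delivers the $O(p\log p)$ rather than $O(p^2)$ sample complexity. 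Everything else — the deterministic contraction, the noisy-power-method unrolling, the initialization bound — is routine by comparison.
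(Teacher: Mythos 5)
Your overall architecture matches the paper's: decompose one step into a noiseless power-method contraction plus a block-wise perturbation, control the perturbation by concentration (exploiting that $\bq_\tau$ is fixed and independent of the $(\tau{+}1)$-st block), and seed with a random $\bq_0$ having $|\ip{\bq_0}{\bu}| = \Omega(1/\sqrt{p})$. Tracking $\tan\theta_\tau$ is equivalent to the paper's $\delta_\tau$ recursion. The genuine gap is in how you bound the perturbation.

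You propose controlling the single scalar $\|(F_{\tau+1}-M)\bq_\tau\|_2 \leq \epsilon'$ and feeding it into $\tan\theta_{\tau+1} \leq (\sigma^2\sin\theta_\tau + \epsilon')\big/\big((1+\sigma^2)\cos\theta_\tau - \epsilon'\big)$. The denominator is where this breaks: that is an \emph{additive} bound on the signal component $\bu^\top\bs_{\tau+1}$, and at initialization $\cos\theta_0 = \Theta(1/\sqrt{p})$, so the denominator stays positive and of the right order only if $\epsilon' = O(1/\sqrt{p})$. Plugging that into $B = \Omega(p\log T/\epsilon'^2)$ inflates the block size to $\Theta(p^2\log T)$, destroying the claimed sample complexity. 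The paper's Lemma~\ref{lem:rank1_conc1} supplies the ingredient you are missing: a \emph{multiplicative} lower bound, $\bu^\top\bs_{\tau+1} \geq (1+\sigma^2)\cos\theta_\tau\bigl(1 - \tfrac{\epsilon}{4(1+\sigma^2)}\bigr)$, so the error along $\bu$ shrinks in lockstep with $\cos\theta_\tau$. Proving it requires a separate argument that exploits the one-dimensional structure $\bu^\top\bx_t = z_t + \bu^\top\bw_t$: the dominant term $\tfrac{\sqrt{1-\delta_\tau}}{B}\sum_t(z_t+\bu^\top\bw_t)^2$ is a chi-square average carrying the $\sqrt{1-\delta_\tau}$ prefactor, and the cross-term with the orthogonal noise is subordinate. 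Separately, your claim that $\|F_{\tau+1}-M\|_2$ is ``$\Theta(1)$ and too weak'' is not correct: with $B = \Omega\bigl((\sigma+\sigma^2)^2 p\log T/\epsilon^2\bigr)$ one has $\|F_{\tau+1}-M\|_2 \leq \epsilon$ w.h.p., and the paper uses exactly this (Lemma~\ref{lem:rank1_conc}) for the \emph{perpendicular} component. The subtlety is not a finer deviation rate for the matrix; it is that the parallel and perpendicular directions need qualitatively different bounds (multiplicative vs.\ additive), and a single estimate on $\|(F_{\tau+1}-M)\bq_\tau\|_2$ cannot deliver both.
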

\begin{proof}\vspace*{-3pt}
The proof decomposes the current iterate into the component of the current iterate, $\mathbf{q}_{\tau}$, in the direction of the true principal component (the spike) $\mathbf{u}$, and the perpendicular component, showing that the former eventually dominates. Doing so hinges on three key components: (a) for large enough $B$, the empirical covariance matrix $F_{\tau+1} =\frac{1}{B}\sum_{t=B\tau+1}^{B(\tau+1)}\bx_t\bx_t^{\top}$ is close to the true covariance matrix $M=\bu\bu^{\top}+\sigma^2I$, i.e., $\|F_{\tau+1}-M\|_2$ is small. In the process, we obtain ``tighter'' bounds for $\|\bu^{\top}(F_{\tau+1}-M)\bu\|$ for {\em fixed} $\bu$; (b) with probability $0.99$ (or any other constant probability), the initial point $\bq_0$ has a component of at least $O(1/\sqrt{p})$ magnitude along the true direction $\bu$; (c) after $\tau$ iterations, the error in estimation is at most $O(\gamma^\tau)$ where $\gamma<1$ is a constant. 

There are several results that we use repeatedly, which we collect here, and prove individually in the appendix.

{\bf Lemmas \ref{lem:rank1_conc}, \ref{lem:rank1_conc1} and \ref{lem:rank1_init}}. Let $B$, $T$ and the data stream $\{\mathbf{x}_i\}$ be as defined in the theorem. Then:
\begin{itemize}
\item (Lemma \ref{lem:rank1_conc}):  With probability $1-C/T$, for $C$ a universal constant, we have: $$\left\|\frac{1}{B}\sum_{t}\bx_t\bx_t^{\top}-\bu\bu^{\top}-\sigma^2I\right\|_2\leq \epsilon.$$
\item (Lemma \ref{lem:rank1_conc1}): With probability $1-C/T$, for $C$ a universal constant, we have: $$\bu^{\top}\bs_{\tau+1}\geq \bu^{\top}\bq_\tau (1+\sigma^2)\left(1-\frac{\epsilon}{4(1+\sigma^2)}\right),$$ where $\bs_t=\frac{1}{B}\sum_{B\tau<t\leq B(\tau+1)}\bx_t\bx_t^{\top}\bq_\tau$. 
\item (Lemma \ref{lem:rank1_init}): Let $\bq_0$ be the initial guess for $\bu$, given by Steps 1 and 2 of Algorithm~\ref{algo:rank1}. Then, w.p. $0.99$: $|\ip{\bq_0}{\bu}|\geq \frac{C_0}{\sqrt{p}}$, where $C_0>0$ is a universal constant. 
\end{itemize}

Step (a) is proved in Lemmas~\ref{lem:rank1_conc} and \ref{lem:rank1_conc1}, while Lemma~\ref{lem:rank1_init} provides the required result for the initial vector $\bq_0$. Using these lemmas, we next complete the proof of the theorem. We note that both (a) and (b) follow from well-known results; we provide them for completeness. 

Let $\bq_\tau=\sqrt{1-\delta_\tau}\bu+\sqrt{\delta_\tau}\bg_\tau, 1\leq \tau\leq n/B$, where $\bg_\tau$ is the component of $\bq_\tau$ that is perpendicular to $\bu$ and $\sqrt{1-\delta_\tau}$ is the magnitude of the component of $\bq_\tau$ along $\bu$. Note that $\bg_\tau$ may well change at each iteration; we only wish to show $\delta_{\tau} \rightarrow 0$. 

Now, using Lemma~\ref{lem:rank1_conc1}, 
the following holds with probability at least $1-C/T$: 
\begin{equation}\bu^{\top}\bs_{\tau+1}\geq \sqrt{1-\delta_\tau} (1+\sigma^2)\left(1-\frac{\epsilon}{4(1+\sigma^2)}\right).\label{eq:rank1_1}\end{equation}
Next, we consider the component of $\bs_{\tau+1}$ that is perpendicular to $\bu$: 
$$  
\bg_{\tau+1}^{\top}\bs_{\tau+1} = \bg_{\tau+1}^{\top}\left(\frac{1}{B}\sum_{t=B\tau+1}^{B(\tau+1)}\bx_t\bx_t^{\top}\right)\bq_{\tau} 
=\bg_{\tau+1}^{\top}(M+E_{\tau})\bq_{\tau},
$$
where $M=\bu\bu^{\top}+\sigma^2I$ and $E_\tau$ is the error matrix: $E_\tau=M-\frac{1}{B}\sum_{t=B\tau+1}^{B(\tau+1)}\bx_t\bx_t^{\top}$. Using Lemma~\ref{lem:rank1_conc}, $\|E_\tau\|_2\leq \epsilon$ (w.p. $\geq 1-C/T$). Hence, w.p. $\geq 1-C/T$: 
\begin{equation}
\bg_{\tau+1}^{\top}\bs_{\tau+1}=\sigma^2\bg_{\tau+1}^{\top}\bq_{\tau}+\|\bg_{\tau+1}\|_2\|E_\tau\|_2\|\bq_\tau\|_2 
\leq \sigma^2\sqrt{\delta_\tau}+\epsilon. \label{eq:rank1_2}
\end{equation}

Now, since $\bq_{\tau+1}=\bs_{\tau+1}/\|\bs_{\tau+1}\|_2$, 
\begin{align}
  \delta_{\tau+1}&=(\bg_{\tau+1}^{\top}\bq_{\tau+1})^2=\frac{(\bg_{\tau+1}^{\top}\bs_{\tau+1})^2}{(\bu^{\top}\bs_{\tau+1})^2+(\bg_{\tau+1}^{\top}\bs_{\tau+1})^2}, \nonumber\\
&\stackrel{(i)}{\leq} \frac{(\bg_{\tau+1}^{\top}\bs_{\tau+1})^2}{(1-\delta_\tau)\left(1+\sigma^2-\frac{\epsilon}{4}\right)^2+(\bg_{\tau+1}^{\top}\bs_{\tau+1})^2},\nonumber\\
&\stackrel{(ii)}{\leq} \frac{(\sigma^2\sqrt{\delta_\tau}+\epsilon)^2}{(1-\delta_\tau)\left(1+\sigma^2-\frac{\epsilon}{4}\right)^2+(\sigma^2\sqrt{\delta_\tau}+\epsilon)^2}, \label{eq:rank1_3}
\end{align}
where, $(i)$ follows from \eqref{eq:rank1_1} and $(ii)$ follows from \eqref{eq:rank1_2} along with the fact that $\frac{x}{c+x}$ is an increasing function in $x$ for $c, x \geq 0$. 

Assuming $\sqrt{\delta_{\tau}}\geq 2\epsilon$ and using \eqref{eq:rank1_3} and bounding the failure probability with a union bound, we get (w.p.\ $\geq 1-\tau\cdot C/T$)
\begin{equation}
\label{eq:rank1_4}
    \delta_{\tau+1} \leq \frac{\delta_{\tau}(\sigma^2+1/2)^2}{(1-\delta_\tau)(\sigma^2+3/4)^2+\delta_{\tau}(\sigma^2+1/2)^2} 
    \stackrel{(i)}{\leq}\frac{\gamma^{2\tau}\delta_0}{ 1-(1-\gamma^{2\tau})\delta_0} \stackrel{(ii)}{\leq}C_1\gamma^{2\tau}p,
  \end{equation}
where $\gamma=\frac{\sigma^2+1/2}{\sigma^2+3/4}$ and $C_1>0$ is a global constant. Inequality $(ii)$ follows from Lemma~\ref{lem:rank1_init}; to prove $(i)$, we need one final result: the following lemma shows that the recursion given by \eqref{eq:rank1_4} decreases $\delta_\tau$ at a fast rate. Interestingly, the rate of decrease in error $\delta_\tau$ initially (for small $\tau$) might be sub-linear but for large enough $\tau$ the rate turns out to be linear. We defer the proof to the appendix.
\begin{lemma}\label{lem:rank1_rec}
If for any $\tau \geq 0$ and $0 < \gamma < 1$, we have $\delta_{\tau+1}\leq \frac{\gamma^2\delta_\tau}{1-\delta_\tau+\gamma^2\delta_\tau}$, then, 
$$
\delta_{\tau+1} \leq \frac{\gamma^{2t+2}\delta_0}{1-(1-\gamma^{2t+2})\delta_0}.
$$
\end{lemma}

Hence, using the above equation after $T=O\left(\log(p/\epsilon)/\log{(1 / \gamma)} \right)$ updates, with probability at least $1-C$,  $\sqrt{\delta_{T}}\leq 2\epsilon$. The result now follows by noting that $\|\bu-\bq_{T}\|_2\leq 2\sqrt{\delta_{T}}$. 
%
\end{proof}
{\bf Remark}: Note that in Theorem~\ref{thm:rank1}, the probability of accurate principal component recovery is a constant and does not decay with $p$. One can correct this by either paying a price of $O(\log p)$ in storage, or in sample complexity: for the former, we can run $O(\log p)$ instances of Algorithm~\ref{algo:rank1} in parallel; alternatively, we can run Algorithm~\ref{algo:rank1} $O(\log p)$ times on fresh data each time, using the next block of data to evaluate the old solutions, always keeping the best one. Either approach guarantees a success probability of at least $1 - {1 \over p^{O(1)}}$.

\subsection{General Rank-$k$ Case}
\label{sec:rankk}
In this section, we consider the general rank-$k$ PCA problem where each sample is assumed to be generated using the model of equation \eqref{eq:model}, where $A\in \mathbb{R}^{p\times k}$ represents the $k$ principal components that need to be recovered. Let $A=U\Lambda V^{\top}$ be the SVD of $A$ where $U \in \mathbb{R}^{p\times k}$, $\Lambda, V\in\mathbb{R}^{k\times k}$. The matrices $U$ and $V$ are orthogonal, i.e., $U^{\top}U=I, V^{\top}V=I$, and $\Sigma$ is a diagonal matrix with diagonal elements $\lambda_1\geq \lambda_2\dots \geq\lambda_k$. The goal is to recover the space spanned by $A$, i.e., ${\rm span}(U)$. Without loss of generality, we can assume that $\|A\|_2=\lambda_1=1$. 

Similar to the rank-$1$ problem, our algorithm for the rank-$k$ problem can be viewed as a streaming variant of the classical orthogonal iteration used for SVD. But unlike the rank-$1$ case, we require a more careful analysis as we need to bound spectral norms of various quantities in intermediate steps and simple, crude analysis can lead to significantly worse bounds. Interestingly, the analysis is entirely different from the standard analysis of the orthogonal iteration as there, the empirical estimate of the covariance matrix is fixed while in our case it varies with each block.

For the general rank-$k$ problem, we use the largest-principal-angle-based distance function between any two given subspaces: 
$$
\dist\left({\rm span}(U), {\rm span}(V)\right)=\dist(U, V) =\|U_\perp^{\top}V\|_2=\|V_\perp^{\top}U\|_2,
$$
where $U_\perp$ and $V_\perp$ represent an orthogonal basis of the perpendicular subspace to ${\rm span}(U)$ and ${\rm span}(V)$, respectively. For the spiked covariance model, it is straightforward to see that this is equivalent to the usual PCA figure-of-merit, the expressed variance.

\begin{theorem}
  \label{thm:rankk}
Consider a data stream, where $\bx_t\in \mathbb{R}^p$ for every $t$ is generated by \eqref{eq:model}, and the SVD of $A\in \mathbb{R}^{p\times k}$ is given by $A=U\Lambda V^{\top}$. Let, wlog, $\lambda_1=1\geq \lambda_2 \geq \dots \geq\lambda_k>0$. Let, 
$$
T=\Omega\left(\log(p/k\epsilon)/\log\left(\frac{\sigma^2+0.75\lambda_k^2}{\sigma^2+0.5\lambda_k^2}\right)\right), \quad
B=\Omega\left(
		\frac{
			\left(
				(1+\sigma)^2\sqrt{k}+\sigma\sqrt{1+\sigma^2}k\sqrt{p}
			\right)^2\log(T)}
			{\lambda_k^4\epsilon^2}
		\right).
$$
Then, after $T$ $B$-size-block-updates, w.p. $0.99$, 
$\dist(U, Q_T)\leq \epsilon$. Hence, the sufficient number of samples for $\epsilon$-accurate recovery of all the top-$k$ principal components is:
\[
	n=\tilde{\Omega}\left(
		{
			\left(
				(1+\sigma)^2\sqrt{k}+\sigma\sqrt{1+\sigma^2}k\sqrt{p}
			\right)^2
			\log(p/k\epsilon) 
				\over 
			\lambda_k^4\epsilon^2
			\log\left(\frac{\sigma^2+0.75\lambda_k^2}{\sigma^2+0.5\lambda_k^2}\right)
		}
	\right).
\]
Again, we use $\tilde{\Omega}(\cdot)$ to suppress the extra $\log(T)$ factor.

\end{theorem}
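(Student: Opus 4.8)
The plan is to follow the same three-ingredient strategy used in the rank-$1$ proof, but carried out in terms of the subspace distance $\dist(U,Q_\tau) = \|U_\perp^\top Q_\tau\|_2$ and with spectral-norm control replacing the scalar arguments. First I would set up the analogue of Lemmas~\ref{lem:rank1_conc}--\ref{lem:rank1_init}: (a) a concentration bound showing that for $B$ as in the theorem, the block empirical covariance $F_{\tau+1}=\frac1B\sum_{t}\bx_t\bx_t^\top$ satisfies $\|F_{\tau+1}-M\|_2\le\epsilon'$ (with $M = A A^\top + \sigma^2 I = U\Lambda^2 U^\top + \sigma^2 I$) with probability $1-C/T$, together with the tighter bound $\|U^\top(F_{\tau+1}-M)U\|_2\le\epsilon'$ on the ``signal block''; and (b) an initialization lemma: a random Gaussian $H$ followed by QR gives $Q_0$ with $\dist(U,Q_0)\le\sqrt{1-c k/p}$, equivalently $\sigma_{\min}(U^\top Q_0)\ge c_0\sqrt{k/p}$, with probability $0.99$. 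These are standard (Davis--Kahan / matrix Bernstein / random matrix facts) and the excerpt already flags that (a) and (b) follow from known results; I would cite and relegate them to the appendix.

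Next is the core recursion. Write $S_{\tau+1}=(M+E_\tau)Q_\tau$ with $\|E_\tau\|_2\le\epsilon'$, and $S_{\tau+1}=Q_{\tau+1}R_{\tau+1}$. The key quantity to track is $\dist(U,Q_{\tau+1})=\|U_\perp^\top Q_{\tau+1}\|_2$. Since $Q_{\tau+1}=S_{\tau+1}R_{\tau+1}^{-1}$, I would bound
$$
\dist(U,Q_{\tau+1}) = \|U_\perp^\top S_{\tau+1} R_{\tau+1}^{-1}\|_2 \le \|U_\perp^\top S_{\tau+1}\|_2\cdot\|R_{\tau+1}^{-1}\|_2 = \frac{\|U_\perp^\top S_{\tau+1}\|_2}{\sigma_{\min}(S_{\tau+1})}.
$$
For the numerator, $U_\perp^\top S_{\tau+1} = U_\perp^\top M Q_\tau + U_\perp^\top E_\tau Q_\tau = \sigma^2 U_\perp^\top Q_\tau + U_\perp^\top E_\tau Q_\tau$ (using $U_\perp^\top M = \sigma^2 U_\perp^\top$), so $\|U_\perp^\top S_{\tau+1}\|_2\le\sigma^2\,\dist(U,Q_\tau)+\epsilon'$. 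For the denominator I need a lower bound on $\sigma_{\min}(S_{\tau+1})$; writing $S_{\tau+1}\supseteq M U U^\top Q_\tau$-type terms, $\sigma_{\min}(U^\top S_{\tau+1})\ge (\lambda_k^2+\sigma^2)\sigma_{\min}(U^\top Q_\tau) - \epsilon'$, and $\sigma_{\min}(U^\top Q_\tau)=\sqrt{1-\dist(U,Q_\tau)^2}$. Combining, and assuming inductively $\dist(U,Q_\tau)$ is not yet below $2\epsilon'/\lambda_k^2$ (otherwise we are done), I would derive
$$
\dist(U,Q_{\tau+1})\le \frac{\dist(U,Q_\tau)(\sigma^2+\tfrac12\lambda_k^2)}{\sqrt{1-\dist(U,Q_\tau)^2}\,(\sigma^2+\tfrac34\lambda_k^2)},
$$
which is exactly the rank-$1$ recursion with $\gamma = (\sigma^2+\tfrac12\lambda_k^2)/(\sigma^2+\tfrac34\lambda_k^2)$; then Lemma~\ref{lem:rank1_rec} applies verbatim. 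Iterating $T$ times and plugging in the initialization bound $\dist(U,Q_0)^2\le 1-c_0^2 k/p$ gives $\dist(U,Q_T)\le\epsilon$ after $T=\Omega(\log(p/(k\epsilon))/\log(1/\gamma))$ blocks, and a union bound over the $T$ blocks keeps the total failure probability constant. Finally $n = BT$ yields the stated sample complexity.

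The main obstacle I expect is the concentration bound (a) with the stated block size $B$. Unlike rank $1$, the error matrix $E_\tau$ is $p\times p$ and I need its spectral norm small; a crude bound on $\|F_{\tau+1}-M\|_2$ via matrix Bernstein or an $\epsilon$-net over the sphere would produce the right $\sqrt{p}$ dependence but I must be careful to get the dependence on $\sigma$, $k$ and $\lambda_k$ to match the $((1+\sigma)^2\sqrt k + \sigma\sqrt{1+\sigma^2}\,k\sqrt p)^2/\lambda_k^4$ factor — this is where the ``more careful analysis'' the text warns about is needed. Specifically, I only need $\|U_\perp^\top E_\tau Q_\tau\|_2$ and $\|U^\top E_\tau Q_\tau\|_2$ small (projections of $E_\tau$, which live in $k$-dimensional or low-rank subspaces), not $\|E_\tau\|_2$ itself in every direction, so the $\sqrt p$ should enter only through the single ``cross'' term between the noise and the current iterate; isolating that term to avoid an extra factor of $p$ is the delicate part. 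A secondary, more routine difficulty is bookkeeping the $\sqrt{1-\dist^2}$ factors and the ``assuming $\dist\ge 2\epsilon'$'' case split cleanly so that the recursion is genuinely of the form handled by Lemma~\ref{lem:rank1_rec}; this is mechanical but easy to get slightly wrong.
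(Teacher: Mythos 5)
Your overall three-ingredient strategy (concentration, initialization, contraction) matches the paper, and the concentration and initialization lemmas you describe are essentially Lemmas~\ref{lem:rankk_lb}--\ref{lem:rankk_init}. The gap is in the contraction step. You bound $\dist(U,Q_{\tau+1})$ by the submultiplicative estimate $\|U_\perp^\top S_{\tau+1}\|_2\cdot\|R_{\tau+1}^{-1}\|_2=\|U_\perp^\top S_{\tau+1}\|_2/\sigma_{\min}(S_{\tau+1})$, which decouples the maximizer of the numerator from the minimizer of the denominator. With your bounds on these two quantities this yields the recursion $d_{\tau+1}^2\le \gamma^2 d_\tau^2/(1-d_\tau^2)$, where $d_\tau=\dist(U,Q_\tau)$. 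That is \emph{not} of the form handled by Lemma~\ref{lem:rank1_rec} (whose hypothesis is $\delta_{\tau+1}\le\gamma^2\delta_\tau/(1-\delta_\tau+\gamma^2\delta_\tau)$, which is strictly smaller), and more seriously it does not even imply $d_{\tau+1}<d_\tau$ unless $d_\tau^2<1-\gamma^2$. Since the random initialization only gives $d_0^2\le 1-C/(kp)$, which is far above $1-\gamma^2$ for large $p$ and $\gamma$ bounded away from $0$, your recursion is vacuous exactly in the regime where it must do its work; this is not a bookkeeping issue that can be patched.

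The paper avoids this by never bounding the numerator and denominator at different vectors. It takes $\bv_1$ to be the top singular vector of $U_\perp^\top Q_{\tau+1}$, sets $\tv_1=\Rtt^{-1}\bv_1/\|\Rtt^{-1}\bv_1\|_2$, and uses orthogonality of $Q_{\tau+1}$ plus $Q_{\tau+1}\Rtt=\Ftt\Qt$ to write
\begin{equation*}
\|U_\perp^\top Q_{\tau+1}\|_2^2=\frac{\|U_\perp^\top \Ftt \Qt \tv_1\|_2^2}{\|U^\top \Ftt \Qt \tv_1\|_2^2+\|U_\perp^\top \Ftt \Qt \tv_1\|_2^2},
\end{equation*}
so that the numerator reappears inside the denominator and both are evaluated at the \emph{same} vector $\tv_1$. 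Applying Lemmas~\ref{lem:rankk_lb} and \ref{lem:rankk_ub} together with the monotonicity of $x\mapsto x/(x+c)$ then gives $d_{\tau+1}^2\le\gamma^2 d_\tau^2/(1-(1-\gamma^2)d_\tau^2)$, which contracts for every $d_\tau\in(0,1)$ and is exactly the hypothesis of Lemma~\ref{lem:rank1_rec}. This choice of $\tv_1$ is the key algebraic step your proposal is missing. A smaller secondary error: your stated initialization bound $\sigma_{\min}(U^\top Q_0)\ge c_0\sqrt{k/p}$ is off by a factor of $k$; Lemma~\ref{lem:rankk_init} gives $\sigma_k(U^\top Q_0)\ge C/\sqrt{kp}$.
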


%
The key part of the proof requires the following additional lemmas that bound the energy of the current iterate along the desired subspace and its perpendicular space (Lemmas \ref{lem:rankk_lb} and {\ref{lem:rankk_ub}), and Lemma \ref{lem:rankk_init}, which controls the quality of the initialization.

{\bf Lemmas \ref{lem:rankk_lb}, \ref{lem:rankk_ub} and \ref{lem:rankk_init}}. Let the data stream, $A$, $B$, and $T$ be as defined in Theorem~\ref{thm:rankk}, $\sigma$ be the variance of noise, $\Ftt=\frac{1}{B}\sum_{B\tau<t\leq B(\tau+1)}\bx_t \bx_t^{\top}$ and $\Qt$ be the $\tau$-th iterate of Algorithm~\ref{algo:rank1}. 
\begin{itemize}
\item ({\bf Lemma \ref{lem:rankk_lb}}): $\forall \ \bv\in \mathbb{R}^{k}$ and $\|\bv\|_2=1$, w.p. $1-5C/T$ we have:
$$\|U^{\top}\Ftt\Qt\bv\|_2\geq (\lambda_k^2+\sigma^2-\frac{\lambda_k^2\epsilon}{4})\sqrt{1-\|U_\perp^{\top}\Qt\|_2^2}.$$

\item ({\bf Lemma \ref{lem:rankk_ub}}): With probability at least $1-4C/T$, 
$\|U_\perp^{\top}\Ftt\Qt\|_2\leq \sigma^2\|U_\perp^{\top}\Qt\|_2+\lambda_k^2\epsilon/2$.

\item ({\bf Lemma \ref{lem:rankk_init}}): Let $Q_0 \in \mathbb{R}^{p\times k}$ be sampled uniformly at random as in Algorithm~\ref{algo:rank1}.  Then, w.p. at least $0.99$:  $\sigma_{k}(U^{\top}Q_0)\geq C\sqrt{\frac{1}{kp}}$. 

\end{itemize}

We provide the proof of the lemmas and theorem in the appendix.

\subsection{Perturbation-tolerant Subspace Recovery}\label{sec:robust}
While our results thus far assume $A$ has rank exactly $k$, and $k$ is known {\it a priori}, here we show that both these can be relaxed; hence our results hold in a quite broad setting. 

Let $\bx_t=A\bz_t+\bw_t$ be the $t$-th step sample, with $A=U\Lambda V^T\in \mathbb{R}^{p\times r}$ and $U\in \mathbb{R}^{p\times r}$ where $r\geq k$ is the true rank of $A$ which is unknown. However, we run Algorithm~\ref{algo:rank1} with rank $k$ and the goal is to recover a subspace $Q_T$, s.t., $Q_T$ is contained in $U$. 

We first observe that the largest-principal angle based distance function that we use in the previous section can directly be used for our more general setting. That is, $\dist(U, Q_T)=\|U_\perp^T Q_T\|_2$ measures the component of $Q_T$ ``outside'' the subspace $U$ and the goal is to show that component is $\leq \epsilon$.

Now, our analysis can be easily modified to handle this more general setting as crucially our distance function does not change. Naturally, now the number of samples we require increases according to $r$. In particular, if 
$$	n=\tilde{\Omega}\left(
		{
			\left(
				(1+\sigma)^2\sqrt{r}+\sigma\sqrt{1+\sigma^2}r\sqrt{p}
			\right)^2
			\log(p/r\epsilon) 
				\over 
			\lambda_r^4\epsilon^2
			\log\left(\frac{\sigma^2+0.75\lambda_r^2}{\sigma^2+0.5\lambda_r^2}\right)
		}
	\right),$$
then $\dist(U, Q_T)\leq \epsilon$. Furthermore, if we assume $r\geq C\cdot k$ (or  a large enough constant $C>0$) then the initialization step provides us better distance, i.e., $\dist(U, Q_0)\leq C'/\sqrt{p}$ rather than $\dist(U, Q_0)\leq C'/\sqrt{kp}$ bound if $r=k$. This initialization step enables us to give tighter sample complexity as the $r \sqrt{p}$ in the numerator above can be replaced by $\sqrt{rp}$. 

\section{Experiments}
\label{sec:experiments}


\begin{figure*}[t!] \center
  \begin{tabular}[t!]{cc}\hspace*{-10pt}
    \includegraphics[width=.5\textwidth]{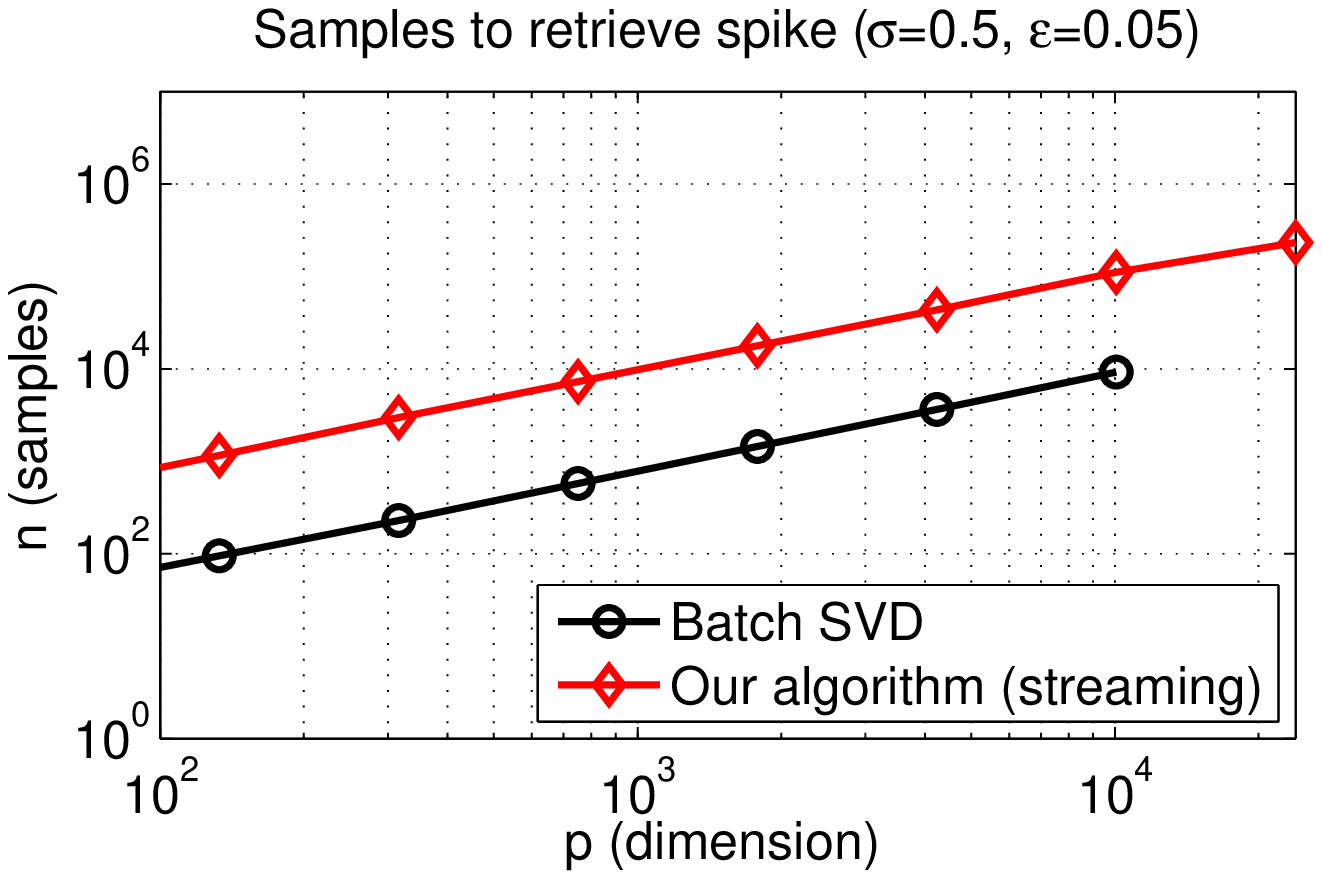}\hspace*{-5pt}&\hspace*{-5pt}
		\includegraphics[width=.5\textwidth]{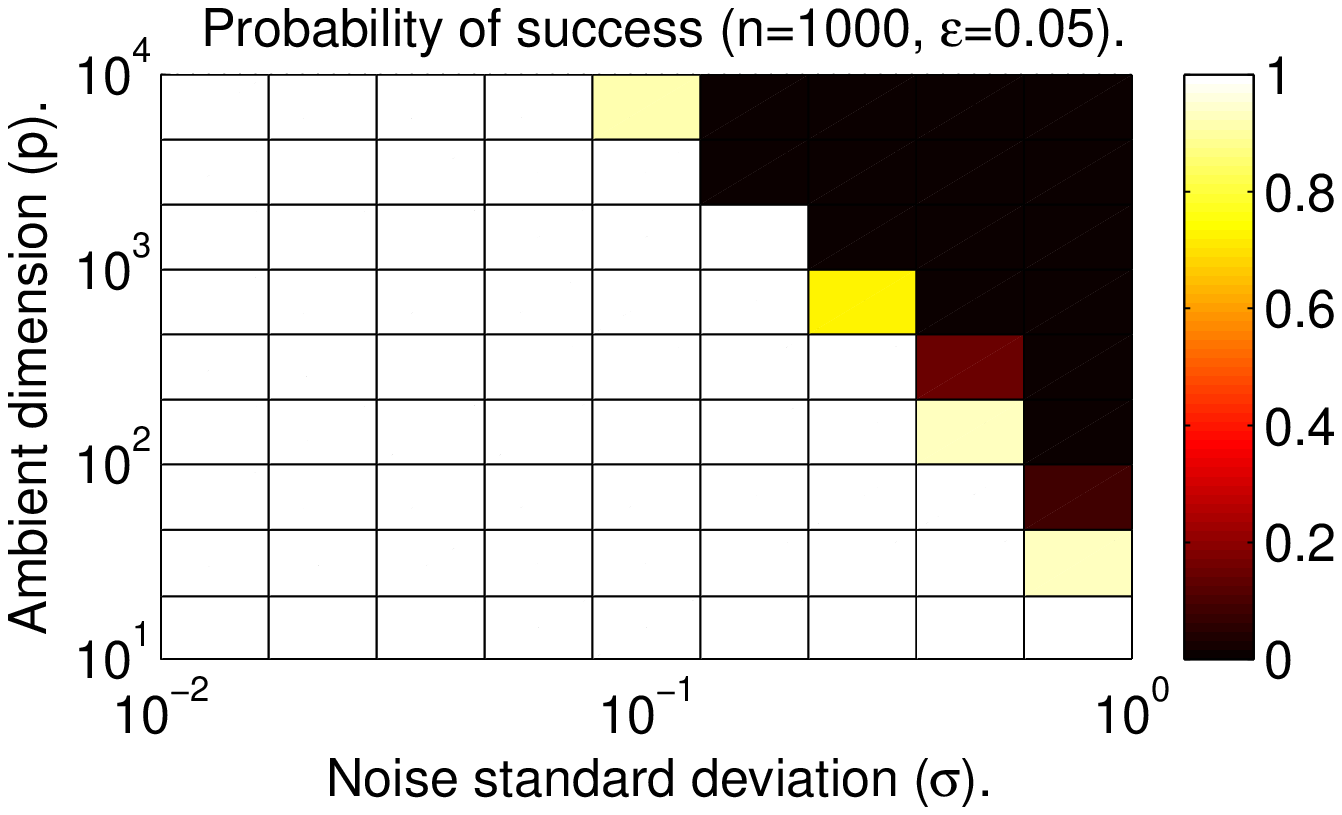}\hspace*{-5pt}\vspace*{-4pt} \\
{\bf {\small (a)}}&{\bf {\small (b)}} \vspace*{-0pt}\\
    \includegraphics[width=.5\textwidth]{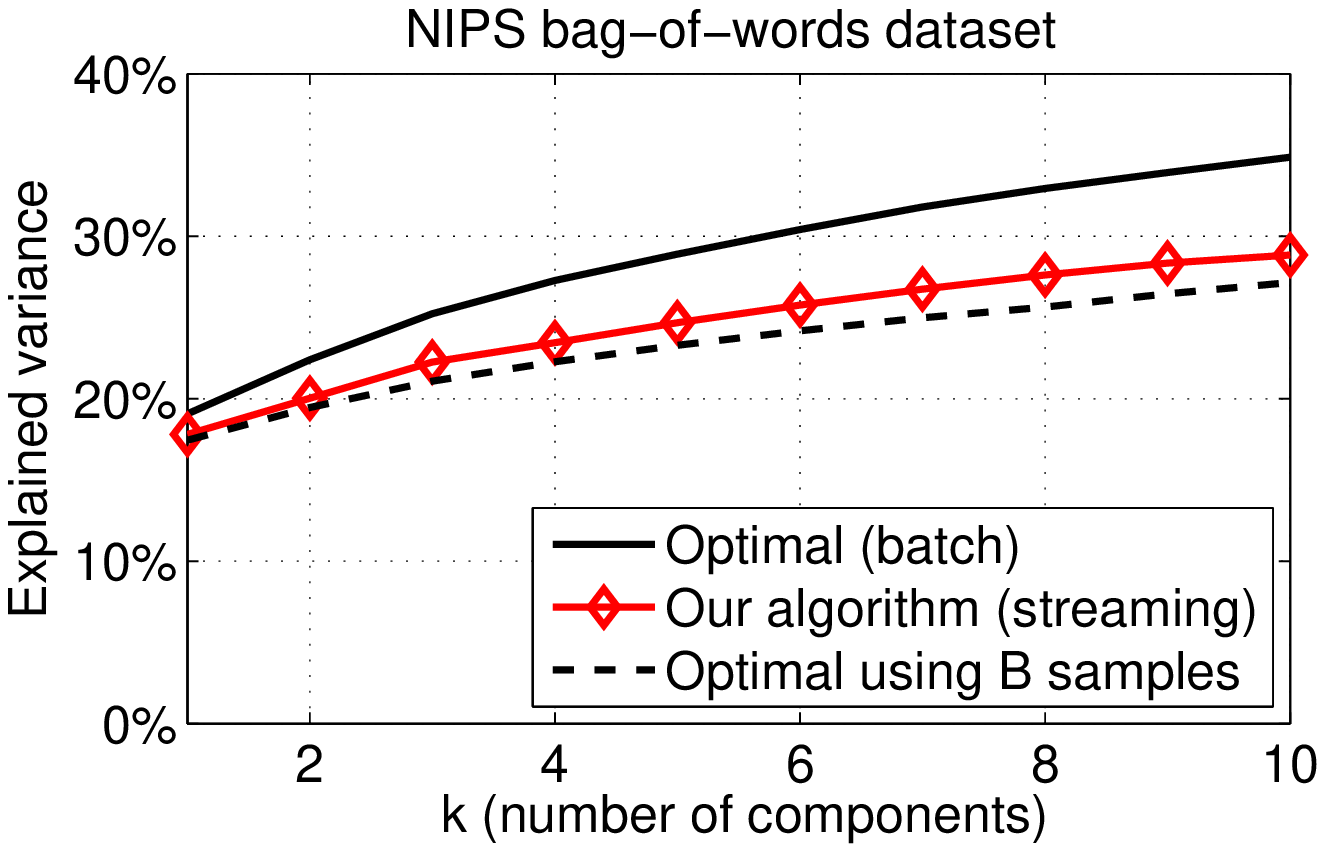}\hspace*{-5pt}&\hspace*{-5pt}
    \includegraphics[width=.5\textwidth]{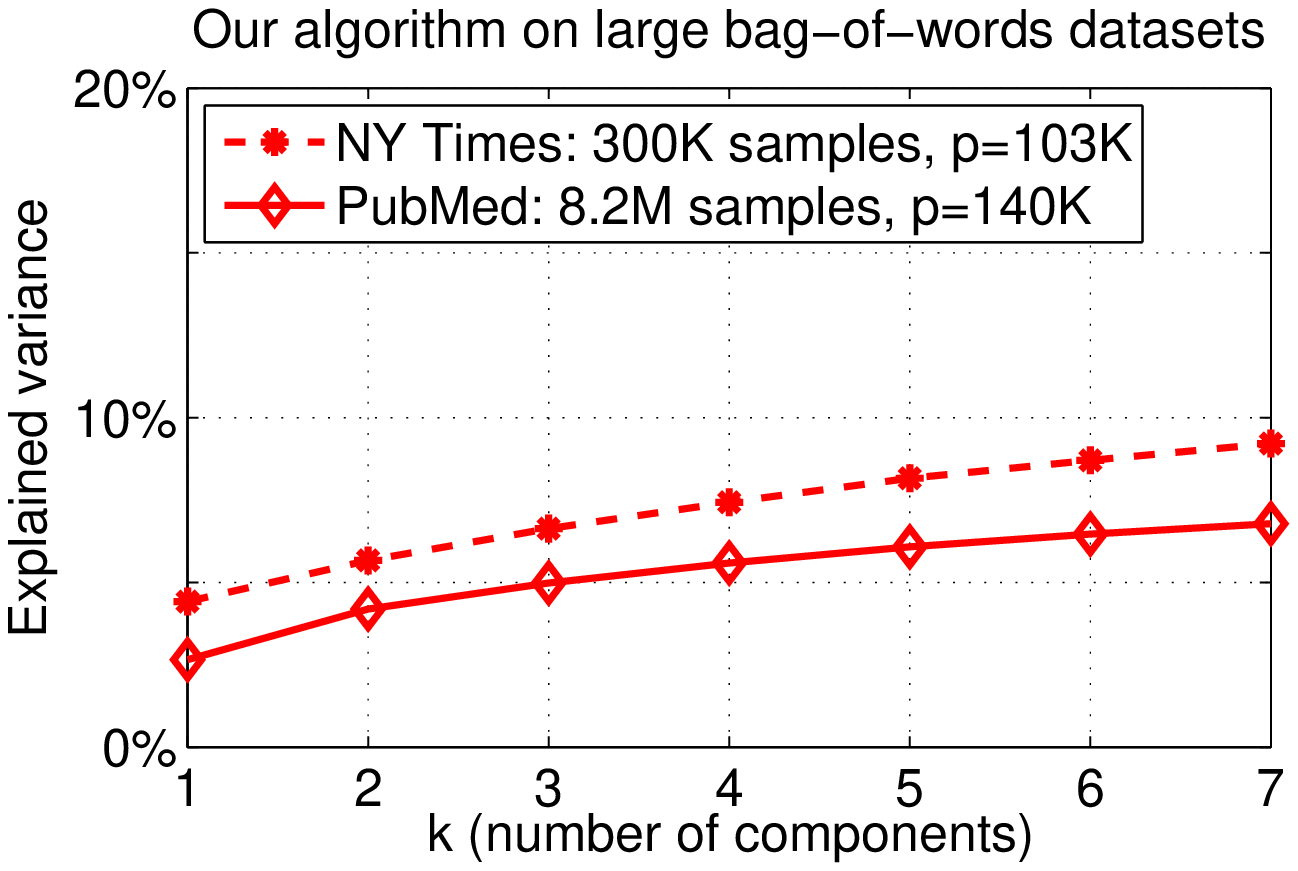}\hspace*{-7pt}\\
{\bf {\small (c)}}&{\bf {\small (d)}}\vspace*{-12pt}
  \end{tabular}
	\caption{{\bf (a)} Number of samples required for recovery of a single component ($k=1$) from the spiked covariance model, with noise standard deviation $\sigma=0.5$ and desired accuracy $\epsilon=0.05$. 
{\bf (b)}\ Fraction of trials in which Algorithm \ref{algo:rank1} successfully recovers the principal component ($k=1$) in the same model, with $\epsilon=0.05$ and $n=1000$ samples,
{\bf (c)} Explained variance by Algorithm \ref{algo:rank1} compared to the optimal batch SVD, on the NIPS bag-of-words dataset.
{\bf (d)} Explained variance by Algorithm \ref{algo:rank1} on the NY Times and PubMed datasets.
}
\label{fig:exp}
\end{figure*}

%
%
%
%


In this section, we show that, as predicted by our theoretical results, our algorithm performs close to the optimal batch SVD. We provide the results from simulating the spiked covariance model, and demonstrate the phase-transition in the probability of successful recovery that is inherent to the statistical problem. Then we stray from the analyzed model and performance metric and test our algorithm on real world--and some very big--datasets, using the metric of explained variance.

In the experiments for Figures~\ref{fig:exp} (a)-(b), 
we draw data from the generative model of \eqref{eq:model}. Our results are averaged over at least $200$ independent runs.
Algorithm \ref{algo:rank1} uses
the block size prescribed in Theorem~\ref{thm:rankk}, with the empirically tuned constant of $0.2$. As expected, our algorithm exhibits linear scaling with respect to the ambient dimension $p$ -- the same as the batch SVD. The missing point on batch SVD's curve (Figure~\ref{fig:exp}(a)), corresponds to $p>2.4\cdot10^4$. Performing SVD on a dense $p \times p$ matrix, either fails or takes a very long time on most modern desktop computers; in contrast, our streaming algorithm easily runs on this size problem. The phase transition plot in Figure~\ref{fig:exp}(b) shows the empirical sample complexity on a large class of problems and corroborates the scaling with respect to the noise variance we obtain theoretically.

Figures~\ref{fig:exp} (c)-(d) complement our complete treatment of the spiked covariance model, with some out-of-model experiments. We used three bag-of-words datasets from \cite{porteous_fast_2008}. We evaluated our algorithm's performance with respect to the fraction of explained variance metric: given the $p \times k$ matrix $V$ output from the algorithm, and all the provided samples in matrix $X$, the fraction of explained variance is defined as 
$\tr(V^TXX^TV)/\tr(XX^T)$.
 To be consistent with our theory, for a dataset of $n$ samples of dimension $p$, we set the number of blocks to be $T=\lceil \log(p) \rceil $ and the size of blocks to $B=\lfloor n / T \rfloor$ in our algorithm.
 The NIPS dataset is the smallest, with $1500$ documents and $12$K words and allowed us to compare our algorithm with the optimal, batch SVD. We had the two algorithms work on the document space ($p=1500$) and report the results in Figure~\ref{fig:exp}(c). The dashed line represents the optimal using $B$ samples. The figure is consistent with our theoretical result: our algorithm performs as well as the batch, with an added $\log(p)$ factor in the sample complexity.

 Finally, in Figure~\ref{fig:exp} (d), we show our algorithm's ability to tackle very large problems. Both the NY Times and PubMed datasets are of prohibitive size for traditional batch methods -- the latter including $8.2$ million documents on a vocabulary of $141$ thousand words -- so we just report the performance of Algorithm~\ref{algo:rank1}. It was able to extract the top $7$ components for each dataset in a few hours on a desktop computer.
A second pass was made on the data to evaluate the results, and we saw 7-10 percent of the variance explained on spaces with $p>10^4$.

\clearpage
\bibliography{preprint}
\bibliographystyle{plain}
\clearpage
\appendix
\setcounter{page}{1}


\section{Lemmas from Section \ref{sec:rank1}}
We first give the statement of all the Lemmas whose proofs we omitted in the body of the paper. Then we provide some results from the literature -- what we call Preliminaries -- and then we prove Theorem \ref{thm:rankk} and the supporting lemmas.

\begin{lemma}\label{lem:rank1_conc}
Let $B$, $T$ and the data stream $\{\mathbf{x}_t\}$ be as defined in Theorem~\ref{thm:rank1}. Then, w.p. $1-C/T$ we have: $$\left\|\frac{1}{B}\sum_{t}\bx_t\bx_t^{\top}-\bu\bu^{\top}-\sigma^2I\right\|_2\leq \epsilon.$$
\end{lemma}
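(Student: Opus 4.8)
The object $F := \frac1B\sum_t \bx_t\bx_t^\top$ is the empirical covariance of $B$ i.i.d.\ copies of the Gaussian vector $\bx = \bu z + \bw$, whose population covariance is exactly $M = \bu\bu^\top + \sigma^2 I$; so Lemma~\ref{lem:rank1_conc} is a Gaussian sample-covariance concentration statement, and the only task is to get the spectral deviation below $\epsilon$ with failure probability $O(1/T)$ for the prescribed $B$. The quick route is to write $\bx_t = M^{1/2}\bg_t$ with $\bg_t \sim \mathcal{N}(0,I_p)$ i.i.d., so that $F - M = M^{1/2}\big(\tfrac1B\sum_t\bg_t\bg_t^\top - I\big)M^{1/2}$ and hence $\|F-M\|_2 \le \|M\|_2\,\big\|\tfrac1B\sum_t\bg_t\bg_t^\top - I\big\|_2 = (1+\sigma^2)\big\|\tfrac1B\sum_t\bg_t\bg_t^\top - I\big\|_2$; a standard $\epsilon$-net-over-$S^{p-1}$ plus $\chi^2$-tail argument (cf.\ \cite{vershynin_how_2010}) gives $\big\|\tfrac1B\sum_t\bg_t\bg_t^\top - I\big\|_2 \le C\big(\sqrt{(p+\log T)/B} + (p+\log T)/B\big)$ with probability $\ge 1-C/T$, which is $\le \epsilon/(1+\sigma^2)$ once $B = \Omega\big((1+\sigma^2)^2(p+\log T)/\epsilon^2\big)$ --- already within the stated block size.

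\textbf{A term-by-term decomposition} gives the sharper $(1+3(\sigma+\sigma^2)\sqrt p)^2$ shape. Expand $\bx_t\bx_t^\top = z_t^2\,\bu\bu^\top + z_t(\bu\bw_t^\top + \bw_t\bu^\top) + \bw_t\bw_t^\top$, average over the block, subtract $M$, and split by the triangle inequality into: (i) $\big(\tfrac1B\sum_t z_t^2 - 1\big)\bu\bu^\top$, of spectral norm $\big|\tfrac1B\sum_t z_t^2 - 1\big|$, a scalar $\chi^2_B$-deviation, $\le \epsilon/3$ w.p.\ $\ge 1-C/T$ once $B = \Omega(\log(T)/\epsilon^2)$; (ii) the cross term $\bu\bv^\top+\bv\bu^\top$ with $\bv := \tfrac1B\sum_t z_t\bw_t$, of spectral norm at most $2\|\bu\|_2\|\bv\|_2 = 2\|\bv\|_2$, where conditioning on $(z_t)$ makes $\bv$ a $\mathcal{N}\big(0,\tfrac{\sigma^2}{B^2}(\sum_t z_t^2)I_p\big)$ vector, so that $\|\bv\|_2 = O(\sigma\sqrt{p/B})$ w.h.p.\ by a $\chi^2_p$-tail together with $\sum_t z_t^2 \le 2B$, giving $\le \epsilon/3$ once $B = \Omega(\sigma^2(p+\log T)/\epsilon^2)$; and (iii) $\tfrac1B\sum_t\bw_t\bw_t^\top - \sigma^2 I$, the genuinely matrix-valued piece, handled exactly as in the quick route (net over $S^{p-1}$, $\chi^2$-tail, union bound), $\le \epsilon/3$ once $B = \Omega(\sigma^4(p+\log T)/\epsilon^2)$. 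Summing the three budgets and the three failure probabilities (each $O(1/T)$, and $p+\log T \lesssim p\log T$) yields $\|F-M\|_2 \le \epsilon$ w.p.\ $\ge 1-C/T$ for $B = \Omega\big((1+3(\sigma+\sigma^2)\sqrt p)^2\log(T)/\epsilon^2\big)$.

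\textbf{Main obstacle.} Parts (i) and (ii) are routine once (ii) is reorganized as a rank-$\le 2$ perturbation and conditioned on the $z_t$'s; the real work is the spectral-norm concentration in (iii), where one must upgrade a fixed-direction $\chi^2$ tail to a bound uniform over $S^{p-1}$ via a covering argument, absorbing the net cardinality $\le 9^p$ into the sub-exponential tail --- this is exactly what forces $B = \Omega(p)$, and, to drive the per-block failure probability down to $1/T$, the extra $\Omega(\log T)$. Keeping the $\sigma$- and $\epsilon$-dependence tight (rather than routing everything through the crude factor $\|M\|_2 = 1+\sigma^2$) is the only reason to prefer the term-by-term split; beyond that, nothing past standard Gaussian concentration is needed.
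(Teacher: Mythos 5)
Your term-by-term decomposition is exactly the one the paper uses: it expands $\bx_t\bx_t^\top = z_t^2\bu\bu^\top + z_t(\bu\bw_t^\top+\bw_t\bu^\top) + \bw_t\bw_t^\top$, subtracts $M$, and controls (i) the scalar $\chi^2_B$ fluctuation on $\bu\bu^\top$, (ii) the rank-$\le 2$ cross term, and (iii) the matrix deviation $\tfrac1B\sum_t\bw_t\bw_t^\top-\sigma^2 I$; the paper then invokes Lemma~\ref{lem:cov_vershynin} for (i) and (iii) and Lemma~\ref{lem:rankk_op} for the cross term (treating $\tfrac1B\sum_t z_t\bw_t\bu^\top$ as a product of independent Gaussian blocks), whereas you condition on $(z_t)$ and use a $\chi^2_p$ tail for $\|\bv\|_2$ --- a cosmetic difference, both correct, same budgets. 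Your ``quick route'' through the whitening $\bx_t = M^{1/2}\bg_t$ is a genuine alternative worth noting: it is shorter and requires only the single covariance bound of Lemma~\ref{lem:cov_vershynin}, at the cost of the cruder prefactor $\|M\|_2^2=(1+\sigma^2)^2$ multiplying $p$; this fails to recover the fact, visible in the paper's $B=\Omega\big((1+3(\sigma+\sigma^2)\sqrt p)^2\log T/\epsilon^2\big)$ and in your split, that the $p$-dependence disappears as $\sigma\to0$. You correctly identify this as the reason the term-by-term split is preferred. No gaps.
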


\begin{lemma}\label{lem:rank1_conc1}
Let $B$, $T$ and the data stream $\{\mathbf{x}_t\}$ be as defined in Theorem~\ref{thm:rank1}. Then, w.p. $1-C/T$ we have: $$\bu^{\top}\bs_{\tau+1}\geq \bu^{\top}\bq_\tau (1+\sigma^2)\left(1-\frac{\epsilon}{4(1+\sigma^2)}\right),$$ where $\bs_t=\frac{1}{B}\sum_{B\tau<t\leq B(\tau+1)}\bx_t\bx_t^{\top}\bq_\tau$. 
\end{lemma}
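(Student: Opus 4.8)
The plan is to compare $\bs_{\tau+1}=F_{\tau+1}\bq_\tau$, where $F_{\tau+1}=\frac1B\sum_{B\tau<t\le B(\tau+1)}\bx_t\bx_t^{\top}$, against $M\bq_\tau$ with $M=\bu\bu^{\top}+\sigma^2 I$, using $\bu^{\top}M\bq_\tau=(1+\sigma^2)\bu^{\top}\bq_\tau$ (here $\|\bu\|_2=1$). The crude estimate $\bu^{\top}(F_{\tau+1}-M)\bq_\tau\ge-\|F_{\tau+1}-M\|_2\ge-\epsilon$ from Lemma~\ref{lem:rank1_conc} is only an \emph{absolute} bound and is too weak: we need the error to be a small \emph{multiple} of $\bu^{\top}\bq_\tau$, which is as small as $\Theta(1/\sqrt p)$ in the early iterations. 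So instead I would split $\bq_\tau$ into its $\bu$-component and its $\bu^{\perp}$-component and concentrate each contribution as a \emph{scalar}.

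Since $\bq_\tau$ is a deterministic function of the initialization and of blocks $1,\dots,\tau$, it is independent of the block-$(\tau+1)$ samples; condition on $\bq_\tau$. As the claimed inequality is invariant under $\bu\mapsto-\bu$, assume $c:=\bu^{\top}\bq_\tau\ge0$, and write $\bq_\tau=c\,\bu+\sqrt{\delta_\tau}\,\bg_\tau$ with $\bg_\tau\perp\bu$, $\|\bg_\tau\|_2=1$, $c^2+\delta_\tau=1$ (so $c=\sqrt{1-\delta_\tau}$); note $\bg_\tau$ is likewise independent of the current block. Then
\[
\bu^{\top}\bs_{\tau+1}=c\big(\bu^{\top}F_{\tau+1}\bu\big)+\sqrt{\delta_\tau}\big(\bu^{\top}F_{\tau+1}\bg_\tau\big).
\]
For the first term, $\bu^{\top}F_{\tau+1}\bu=\frac1B\sum_t(\bu^{\top}\bx_t)^2$ averages i.i.d.\ variables with $\bu^{\top}\bx_t=z_t+\bu^{\top}\bw_t\sim\mathcal N(0,1+\sigma^2)$, so $(\bu^{\top}\bx_t)^2/(1+\sigma^2)\sim\chi^2_1$; a sub-exponential (Bernstein) tail bound gives, w.p.\ $1-C/T$, $\bu^{\top}F_{\tau+1}\bu\ge(1+\sigma^2)\big(1-C\sqrt{\log T/B}\big)$, which by the choice of $B$ is $\ge(1+\sigma^2)-\epsilon/8$.

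For the cross term, conditioned on $\bg_\tau$ the pair $(\bu^{\top}\bx_t,\bg_\tau^{\top}\bx_t)$ is jointly Gaussian and mean zero, with $\mathrm{Var}(\bu^{\top}\bx_t)=1+\sigma^2$, $\mathrm{Var}(\bg_\tau^{\top}\bx_t)=\sigma^2$ (since $\bg_\tau\perp\bu$ removes the signal part and leaves only noise), and $\mathrm{Cov}=\sigma^2\bu^{\top}\bg_\tau=0$; hence $\mathbb E[(\bu^{\top}\bx_t)(\bg_\tau^{\top}\bx_t)]=0$ and each product is sub-exponential with parameter $O(\sigma\sqrt{1+\sigma^2})$. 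Bernstein again gives, w.p.\ $1-C/T$, $|\bu^{\top}F_{\tau+1}\bg_\tau|\le C\sigma\sqrt{1+\sigma^2}\sqrt{\log T/B}$, and here the $\sqrt p$ sitting inside the prescribed $B$ is exactly what forces this to be $\le C_0\epsilon/(8\sqrt p)$. Using the bound $c=\bu^{\top}\bq_\tau\ge C_0/\sqrt p$ — which holds at $\tau=0$ by Lemma~\ref{lem:rank1_init} and is maintained inductively on the good event, exactly as in the proof of Theorem~\ref{thm:rank1} — together with $\sqrt{\delta_\tau}\le1$, the cross term is $\ge-c\,\epsilon/8$. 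Combining,
\[
\bu^{\top}\bs_{\tau+1}\ \ge\ c(1+\sigma^2)-c\tfrac\epsilon8-c\tfrac\epsilon8\ =\ \bu^{\top}\bq_\tau(1+\sigma^2)\Big(1-\tfrac{\epsilon}{4(1+\sigma^2)}\Big),
\]
and a union bound over the two failure events yields probability $1-C/T$ after adjusting $C$.

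The main obstacle is the cross term: the operator-norm bound throws away a factor of $\sqrt p$, so one must use that the iterate $\bq_\tau$ is independent of the fresh block in order to condition on $\bg_\tau$, observe that $\bg_\tau^{\top}\bx_t$ is pure noise uncorrelated with $\bu^{\top}\bx_t$, and then verify that the $\sqrt p$ built into the prescribed block size $B$ is just enough to push the resulting scalar deviation below $O(\epsilon/\sqrt p)$. The remaining ingredients — the $\chi^2$ concentration and the sub-exponential Bernstein inequality with the $\log T$ needed for the $1-C/T$ probability — are routine.
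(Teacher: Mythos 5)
Your proposal is correct and follows essentially the same route as the paper's proof of Lemma~\ref{lem:rank1_conc1}: decompose $\bq_\tau$ into its $\bu$-component and perpendicular component, concentrate $\frac{1}{B}\sum_t(\bu^\top\bx_t)^2$ via a $\chi^2$-type bound, bound the cross term $\frac{1}{B}\sum_t(z_t+\bu^\top\bw_t)(\bw_t^\top\bg_\tau)$ as a product of independent Gaussians (the paper invokes Lemma~\ref{lem:rankk_op}, you invoke Bernstein for sub-exponentials -- same estimate), and then convert that absolute cross-term bound into one proportional to $\sqrt{1-\delta_\tau}=\bu^\top\bq_\tau$ via the inductively maintained lower bound from Lemma~\ref{lem:rank1_init}. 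Your remarks on why the naive $\|F_{\tau+1}-M\|_2\le\epsilon$ operator-norm bound is too weak, and on the role of the $\sqrt{p}$ baked into $B$, correctly articulate what is implicit in the paper's step $(i)$; no gap.
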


\begin{lemma}\label{lem:rank1_init}
  Let $\bq_0$ be the initial guess for $\bu$, given by Steps 1 and 2 of Algorithm~\ref{algo:rank1}. Then, w.p. $0.99$: $|\ip{\bq_0}{\bu}|\geq \frac{C_0}{\sqrt{p}}$, where $C_0>0$ is a universal constant. 
\end{lemma}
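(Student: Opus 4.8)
The plan is to use the rotational invariance of the Gaussian initialization. By Steps~1--2 of Algorithm~\ref{algo:rank1}, $\bq_0 = \bg/\|\bg\|_2$ with $\bg \sim \mathcal{N}(0, I_{p\times p})$, so $\bq_0$ is uniform on the unit sphere. Writing
\[
\ip{\bq_0}{\bu} = \frac{\ip{\bg}{\bu}}{\|\bg\|_2},
\]
it suffices to lower bound the numerator $|\ip{\bg}{\bu}|$ and upper bound the denominator $\|\bg\|_2$, each with probability close to $1$, and then combine by a union bound.

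For the numerator: since $\bu$ is a fixed unit vector, $Z := \ip{\bg}{\bu} \sim \mathcal{N}(0,1)$. Gaussian anti-concentration (the density of $Z$ is bounded by $1/\sqrt{2\pi}$) gives $\bbP(|Z| \leq t) \leq \sqrt{2/\pi}\, t$ for every $t>0$; choosing a small enough universal constant $t_0$ makes this at most $0.005$, so $|\ip{\bg}{\bu}| \geq t_0$ with probability at least $0.995$. For the denominator: $\|\bg\|_2^2 \sim \chi^2_p$, so $\mathbb{E}\,\|\bg\|_2^2 = p$, and Markov's inequality (or a Laurent--Massart tail bound) yields $\|\bg\|_2 \leq 2\sqrt{p}$ with probability at least $0.995$. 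A union bound then shows that with probability at least $0.99$ both events hold, whence $|\ip{\bq_0}{\bu}| \geq t_0/(2\sqrt{p}) =: C_0/\sqrt{p}$.

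There is no serious obstacle here; the only points requiring a little care are (a) calibrating the two constants so the failure probabilities sum to at most $0.01$, and (b) the regime of very small $p$, where either the crude $\|\bg\|_2 \le 2\sqrt p$ bound must be justified directly or, since the claim is essentially vacuous for $p = O(1)$, the constant $C_0$ can simply be adjusted. The mildly less standard ingredient is the anti-concentration \emph{lower} bound on $|\ip{\bg}{\bu}|$ (as opposed to the usual upper tail bounds), but it is immediate from the boundedness of the Gaussian density.
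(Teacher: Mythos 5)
Your proof is correct and follows essentially the same route as the paper's: write $\bq_0 = \bg/\|\bg\|_2$ with $\bg\sim\mathcal N(0,I_p)$, lower-bound the numerator $|\ip{\bg}{\bu}|\sim|N(0,1)|$ by a constant via Gaussian anti-concentration, upper-bound $\|\bg\|_2$ by $O(\sqrt p)$ via a standard concentration bound (the paper cites its Lemma~\ref{lem:smax_vershynin} for this), and combine by a union bound. The only caveat is that plain Markov on $\|\bg\|_2^2$ gives failure probability $1/4$ rather than $0.005$ for the threshold $2\sqrt p$, so you do need the sharper Laurent--Massart (or Vershynin) bound you mention, or else enlarge the constant in the norm bound and absorb it into $C_0$; otherwise the calibration and the small-$p$ remark are both fine.
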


\begin{lemma}
If for any $\tau \geq 0$ and $0 < \gamma < 1$, we have $\delta_{\tau+1}\leq \frac{\gamma^2\delta_\tau}{1-\delta_\tau+\gamma^2\delta_\tau}$, then, 
$$
\delta_{\tau+1} \leq \frac{\gamma^{2t+2}\delta_0}{1-(1-\gamma^{2t+2})\delta_0}.
$$
\end{lemma}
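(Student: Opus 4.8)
## Proof Proposal for Lemma~\ref{lem:rank1_rec}

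The plan is to prove the claimed closed-form bound by induction on $\tau$ (equivalently, on the index $t$ appearing in the statement). The recursion $\delta_{\tau+1}\leq \frac{\gamma^2\delta_\tau}{1-\delta_\tau+\gamma^2\delta_\tau}$ has a convenient structure: the map $x \mapsto \frac{\gamma^2 x}{1-x+\gamma^2 x}$ is monotone increasing in $x$ on $[0,1)$, so it suffices to track the worst case, i.e. to treat the recursion as an equality and show the resulting sequence equals $\frac{\gamma^{2\tau+2}\delta_0}{1-(1-\gamma^{2\tau+2})\delta_0}$. The cleanest route is a change of variables that linearizes the recursion.

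Concretely, I would set $r_\tau := \frac{1-\delta_\tau}{\delta_\tau}$ (the ``odds ratio''), which is well-defined and positive whenever $0<\delta_\tau<1$. Rewriting the recursion in terms of $r_\tau$: from $\delta_{\tau+1} = \frac{\gamma^2\delta_\tau}{1-\delta_\tau+\gamma^2\delta_\tau}$ one computes
\begin{equation*}
\frac{1}{\delta_{\tau+1}} = \frac{1-\delta_\tau+\gamma^2\delta_\tau}{\gamma^2\delta_\tau} = \frac{1}{\gamma^2}\cdot\frac{1-\delta_\tau}{\delta_\tau} + 1 = \frac{1}{\gamma^2}r_\tau + 1,
\end{equation*}
so that $r_{\tau+1} = \frac{1}{\delta_{\tau+1}} - 1 = \frac{1}{\gamma^2}r_\tau$. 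Thus $r_\tau = \gamma^{-2\tau} r_0$, and unwinding gives $\delta_\tau = \frac{1}{1+r_\tau} = \frac{1}{1+\gamma^{-2\tau}r_0} = \frac{\gamma^{2\tau}\delta_0}{\gamma^{2\tau}\delta_0 + (1-\delta_0)}$, which rearranges to $\frac{\gamma^{2\tau}\delta_0}{1-(1-\gamma^{2\tau})\delta_0}$. Shifting the index by one (the lemma states the bound for $\delta_{\tau+1}$ with exponent $2t+2$) gives exactly the claimed expression. For the inequality version, I would note that since the update map is increasing, if $\delta_\tau$ is bounded above by the $\tau$-th term of the exact sequence then so is $\delta_{\tau+1}$ by the $(\tau+1)$-st term, completing the induction; the base case is the trivial identity $\delta_0 = \delta_0$.

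The only real care needed is domain-checking: the substitution $r_\tau = (1-\delta_\tau)/\delta_\tau$ requires $\delta_\tau \in (0,1)$ at every step, and one must verify the recursion preserves this (it does: if $\delta_\tau \in (0,1)$ then the denominator $1-\delta_\tau+\gamma^2\delta_\tau = 1-(1-\gamma^2)\delta_\tau$ lies in $(\gamma^2, 1)$, hence is positive, and $\delta_{\tau+1}\in(0,\gamma^2)\subset(0,1)$). If $\delta_0 = 0$ the claim is trivial, and $\delta_0=1$ cannot occur given Lemma~\ref{lem:rank1_init} places $\delta_0$ strictly below $1$. I do not anticipate a genuine obstacle here — the substitution does all the work and turns a nonlinear recursion into a geometric one; the main thing to get right is bookkeeping of the index shift between $\tau$ and $t$ in the statement.
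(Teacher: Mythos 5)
Your proof is correct and essentially matches the paper's: the paper likewise passes to reciprocals of $\delta_\tau$ to linearize the recursion and runs an induction, though it does so in one step (plugging the inductive hypothesis on $1/\delta_\tau$ directly into $\delta_{\tau+1}\le \gamma^2/(1/\delta_\tau - (1-\gamma^2))$) rather than first solving the equality case and then appealing to monotonicity of the update map. Your odds-ratio substitution $r_\tau=(1-\delta_\tau)/\delta_\tau$ makes the geometric structure a bit more transparent, but the underlying algebra is the same.
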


\section{Lemmas from Section \ref{sec:rankk}}

\begin{lemma}
  \label{lem:rankk_lb}
Let $\mathcal{X}$, $A$, $B$, and $T$ be as defined in Theorem~\ref{thm:rankk}. Also, let $\sigma$ be the variance of noise, $\Ftt=\frac{1}{B}\sum_{B\tau<t\leq B(\tau+1)}\bx_t \bx_t^{\top}$ and $\Qt$ be the $\tau$-th iterate of Algorithm~\ref{algo:rank1}. 
Then, $\forall \ \bv\in \mathbb{R}^{k}$ and $\|\bv\|_2=1$, w.p. $1-5C/T$ we have:
$$\|U^{\top}\Ftt\Qt\bv\|_2\geq (\lambda_k^2+\sigma^2-\frac{\lambda_k^2\epsilon}{4})\sqrt{1-\|U_\perp^{\top}\Qt\|_2^2}.$$ 
\end{lemma}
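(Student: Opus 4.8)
The plan is to carry out, at the level of subspaces, the decomposition behind the rank-$1$ Lemmas~\ref{lem:rank1_conc} and~\ref{lem:rank1_conc1}. Write the population covariance as $M=AA^{\top}+\sigma^2 I=U(\Lambda^2+\sigma^2 I)U^{\top}+\sigma^2 U_\perp U_\perp^{\top}$ and set $E_\tau:=\Ftt-M$, so that $\|U^{\top}\Ftt\Qt\bv\|_2\ge\|U^{\top}M\Qt\bv\|_2-\|U^{\top}E_\tau\Qt\bv\|_2$. Since $U^{\top}M=(\Lambda^2+\sigma^2 I)U^{\top}$, the first term equals $\|(\Lambda^2+\sigma^2 I)(U^{\top}\Qt\bv)\|_2\ge(\lambda_k^2+\sigma^2)\|U^{\top}\Qt\bv\|_2$; and because $\Qt$ has orthonormal columns and $\|\bv\|_2=1$ we have $\|U^{\top}\Qt\bv\|_2^2+\|U_\perp^{\top}\Qt\bv\|_2^2=1$, so $\|U^{\top}\Qt\bv\|_2=\sqrt{1-\|U_\perp^{\top}\Qt\bv\|_2^2}\ge\sqrt{1-\|U_\perp^{\top}\Qt\|_2^2}=\sigma_k(U^{\top}\Qt)$. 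Hence the whole lemma reduces to showing $\|U^{\top}E_\tau\Qt\bv\|_2\le\tfrac{\lambda_k^2\epsilon}{4}\,\|U^{\top}\Qt\bv\|_2$, uniformly over unit $\bv$.

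To get that, I would expand $E_\tau$ using the model~\eqref{eq:model}: with $\widehat\Sigma_z=\tfrac1B\sum_t\bz_t\bz_t^{\top}$, $\widehat C=\tfrac1B\sum_t\bz_t\bw_t^{\top}$ and $\widehat\Sigma_w=\tfrac1B\sum_t\bw_t\bw_t^{\top}$ (sums over $B\tau<t\le B(\tau+1)$), one has $E_\tau=A(\widehat\Sigma_z-I)A^{\top}+A\widehat C+\widehat C^{\top}A^{\top}+(\widehat\Sigma_w-\sigma^2 I)$. Multiplying on the left by $U^{\top}$ (so $U^{\top}A=\Lambda V^{\top}$) and on the right by $\Qt\bv$, and using $A^{\top}U_\perp=0$ to replace $A^{\top}\Qt\bv$ by $V\Lambda(U^{\top}\Qt\bv)$, the four pieces split into two groups. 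The ``signal-aligned'' pieces, from $A(\widehat\Sigma_z-I)A^{\top}$ and $\widehat C^{\top}A^{\top}$, are bounded by $\|\widehat\Sigma_z-I\|_2\|U^{\top}\Qt\bv\|_2$ and $\|\widehat C U\|_2\|U^{\top}\Qt\bv\|_2$ (using $\|\Lambda\|_2=1$), and standard Gaussian concentration makes each coefficient $O(\sqrt{k/B})\le\tfrac{\lambda_k^2\epsilon}{8}$ for the prescribed $B$. The ``free'' pieces, $\Lambda V^{\top}\widehat C\Qt\bv$ and $U^{\top}(\widehat\Sigma_w-\sigma^2 I)\Qt\bv$, are the crux: conditioning on $\Qt$, which depends only on blocks $\le\tau$ and is hence independent of the fresh samples $\bx_{B\tau+1},\dots,\bx_{B(\tau+1)}$, the matrices $\widehat C\Qt$ and $U^{\top}(\widehat\Sigma_w-\sigma^2 I)\Qt$ are $k\times k$ sums of mean-zero products of independent Gaussians, so matrix-Bernstein-type estimates give $\|\widehat C\Qt\|_2+\|U^{\top}(\widehat\Sigma_w-\sigma^2 I)\Qt\|_2=O\big((\sigma+\sigma^2)\sqrt{k/B}\big)$. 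The block size $B$ in Theorem~\ref{thm:rankk}, whose numerator carries the squared term $\big(\sigma\sqrt{1+\sigma^2}\,k\sqrt p\big)^2$, is calibrated exactly so that this is at most $\tfrac{\lambda_k^2\epsilon}{8}\cdot\tfrac{C}{\sqrt{kp}}$; and by Lemma~\ref{lem:rankk_init} together with the ``signal never shrinks'' invariant carried in the proof of Theorem~\ref{thm:rankk} (namely $\|U_\perp^{\top}Q_{\tau'}\|_2\le\|U_\perp^{\top}Q_0\|_2$, i.e.\ $\sigma_k(U^{\top}\Qt)\ge\sigma_k(U^{\top}Q_0)\ge C/\sqrt{kp}$), this is at most $\tfrac{\lambda_k^2\epsilon}{8}\sigma_k(U^{\top}\Qt)\le\tfrac{\lambda_k^2\epsilon}{8}\|U^{\top}\Qt\bv\|_2$. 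Adding the two groups yields $\|U^{\top}E_\tau\Qt\bv\|_2\le\tfrac{\lambda_k^2\epsilon}{4}\|U^{\top}\Qt\bv\|_2$, and combined with the first paragraph this gives the claim. All the estimates were on operator norms of matrices not depending on $\bv$, so the bound is automatically uniform in $\bv$; the at most five concentration events each fail with probability $O(1/T)$ because of the $\log T$ in $B$, and a union bound gives the stated $1-5C/T$.

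For the individual concentration estimates I would reuse the sub-exponential tail and matrix-Bernstein / $\epsilon$-net tools collected among the appendix preliminaries (the same ones behind Lemmas~\ref{lem:rank1_conc}--\ref{lem:rank1_conc1}): $\|\widehat\Sigma_z-I\|_2$, $\|\widehat C U\|_2$, and the conditional norms of $\widehat C\Qt$ and $U^{\top}(\widehat\Sigma_w-\sigma^2 I)\Qt$ are all deviations of $k\times k$ Wishart- or cross-covariance-type matrices from their means at rate $O(\sqrt{k/B}+k/B)$; substituting $B$ turns each into the desired $O(\lambda_k^2\epsilon)$ or $O(\lambda_k^2\epsilon/\sqrt{kp})$ quantity.

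I expect the main obstacle to be exactly those ``free'' terms. In the rank-$1$ case one only needed a \emph{scalar} quadratic form $\bu^{\top}E_\tau\bq_\tau$ to be small compared to $\bu^{\top}\bq_\tau$; here one must force the full $k\times k$ blocks $U^{\top}(\widehat\Sigma_w-\sigma^2 I)\Qt$ and $\widehat C\Qt$ to be small compared to $\sigma_k(U^{\top}\Qt)$, which right after initialization can be as small as $\Theta(1/\sqrt{kp})$. That is precisely what forces the $k\sqrt p$ inside $B$, what makes the conditioning-on-$\Qt$ step essential, and what couples this lemma to Lemma~\ref{lem:rankk_init} and to the invariant that the iterate's signal component never shrinks. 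Using the crude bound $\|E_\tau\|_2\le\epsilon$ instead would be far too lossy when $\sigma_k(U^{\top}\Qt)$ is small, which is why the careful spectral-norm bookkeeping promised in the body is needed.
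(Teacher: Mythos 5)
Your proof is essentially the same approach as the paper's: expand $U^{\top}\Ftt\Qt\bv$ using the generative model, peel off the population part $(\Lambda^2+\sigma^2 I)U^{\top}\Qt\bv$, bound the deviation term-by-term using concentration for sub-Gaussian/Wishart/cross-covariance blocks, and absorb the ``free'' pieces that carry no $\|U^{\top}\Qt\bv\|_2$ factor by dividing through the inductive lower bound $\sigma_k(U^{\top}\Qt)\geq\sigma_k(U^{\top}Q_0)\geq C/\sqrt{kp}$ from Lemma~\ref{lem:rankk_init}. The identification of the $\sqrt{kp}$ loss from the free part, the role of conditioning on $\Qt$, the five-event union bound, and the calibration of $B$ against the $k\sqrt{p}$ term in the numerator all match the paper's proof of Lemma~\ref{lem:rankk_lb}.

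One claim, however, is false as written and is precisely the spot the paper's finer bookkeeping is designed to handle. You state that, conditioned on $\Qt$, both $\widehat C\Qt$ and $U^{\top}(\widehat\Sigma_w-\sigma^2 I)\Qt$ are ``sums of mean-zero products of \emph{independent} Gaussians.'' That is true for $\widehat C\Qt=\tfrac1B\sum_t\bz_t(\Qt^{\top}\bw_t)^{\top}$, but for $U^{\top}\widehat\Sigma_w\Qt=\tfrac1B\sum_t(U^{\top}\bw_t)(\Qt^{\top}\bw_t)^{\top}$ the two factors have cross-covariance $\sigma^2\,U^{\top}\Qt\neq 0$ and are not independent; indeed that dependence is exactly why the summand has nonzero mean $\sigma^2 U^{\top}\Qt$, so Lemma~\ref{lem:rankk_op} cannot be invoked directly here. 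The paper sidesteps this by inserting $I=UU^{\top}+U_\perp U_\perp^{\top}$ between $\bw_t\bw_t^{\top}$ and $\Qt$: the $UU^{\top}$ half becomes a $k\times k$ Wishart deviation carrying a $U^{\top}\Qt\bv$ factor (paper's term (b)), and the $U_\perp U_\perp^{\top}$ half pairs $U^{\top}\bw_t$ with the genuinely independent $U_\perp^{\top}\bw_t$ and is folded into the fifth term. Your argument can be repaired the same way (or by a matrix-Bernstein bound that tolerates the correlation), and the final rate $\sigma^2\sqrt{k\log T/B}$ is unchanged, so this is a local fix rather than a structural flaw --- but as stated the independence claim does not hold, and it should not be waved through.
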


\begin{lemma}
  \label{lem:rankk_ub}
Let $\mathcal{X}$, $A$, $B$, $\Ftt$, $\Qt$ be as defined in Lemma~\ref{lem:rankk_lb}. Then, w.p. $1-4C/T$, 
$\|U_\perp^{\top}\Ftt\Qt\|_2\leq \sigma^2\|U_\perp^{\top}\Qt\|_2+\lambda_k^2\epsilon/2$.
\end{lemma}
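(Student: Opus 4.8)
The plan is to mirror the rank-$1$ argument but track spectral norms carefully. Write $\Ftt = M + E_\tau$ where $M = A A^{\top} + \sigma^2 I = U \Lambda^2 U^{\top} + \sigma^2 I$ and $E_\tau = \Ftt - M$ is the block sampling error. Then
\begin{equation}
U_\perp^{\top}\Ftt\Qt = U_\perp^{\top}(U\Lambda^2 U^{\top} + \sigma^2 I)\Qt + U_\perp^{\top}E_\tau\Qt = \sigma^2 U_\perp^{\top}\Qt + U_\perp^{\top}E_\tau\Qt,
\end{equation}
since $U_\perp^{\top}U = 0$. Taking spectral norms and using $\|U_\perp\|_2 = \|\Qt\|_2 = 1$ gives $\|U_\perp^{\top}\Ftt\Qt\|_2 \le \sigma^2\|U_\perp^{\top}\Qt\|_2 + \|E_\tau\|_2$, so everything reduces to showing $\|E_\tau\|_2 \le \lambda_k^2\epsilon/2$ with probability $1-4C/T$.

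First I would establish the concentration bound $\|E_\tau\|_2 = \|\Ftt - M\|_2 \le \lambda_k^2\epsilon/2$; this is the rank-$k$ analogue of Lemma~\ref{lem:rank1_conc}, and is presumably proved via a matrix Bernstein / covariance-estimation argument on the $B$ i.i.d.\ samples $\bx_t\bx_t^{\top}$ in block $\tau+1$. The summands here are unbounded (Gaussian-times-Gaussian), so the clean route is to split $\bx_t = A\bz_t + \bw_t$, expand $\bx_t\bx_t^{\top}$ into the four cross terms $A\bz_t\bz_t^{\top}A^{\top}$, $A\bz_t\bw_t^{\top}$, $\bw_t\bz_t^{\top}A^{\top}$, $\bw_t\bw_t^{\top}$, and control each empirical average separately: the $\bw\bw^{\top}$ term is the dominant one and needs the standard ``covariance estimation in operator norm'' bound $\|\frac1B\sum\bw_t\bw_t^{\top} - \sigma^2 I\|_2 \lesssim \sigma^2(\sqrt{p/B} + p/B)$ with high probability, while the mixed terms are handled by bounding $\|\frac1B\sum\bz_t\bw_t^{\top}\|_2$ and multiplying by $\|A\|_2 = 1$. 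Matching these bounds against the prescribed block size $B = \Omega\!\big(((1+\sigma)^2\sqrt{k} + \sigma\sqrt{1+\sigma^2}\,k\sqrt{p})^2\log(T)/(\lambda_k^4\epsilon^2)\big)$ yields $\|E_\tau\|_2 \le \lambda_k^2\epsilon/2$ except with probability $O(1/T)$; the four cross terms contribute the factor $4$ in the failure probability $4C/T$ via a union bound.

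The main obstacle is the operator-norm concentration of the Wishart-type error $E_\tau$ over only $B$ samples with the correct dependence on $p$, $k$, $\sigma$ and $\lambda_k$ — in particular getting the ``$\sqrt{p/B}$ plus $p/B$'' behavior right (so that $B \sim p$ up to log and $\sigma,k,\lambda_k$ factors suffices) rather than a crude bound that would force $B \sim p^2$. Since Lemma~\ref{lem:rank1_conc} already does exactly this for $k=1$, I would invoke/adapt its proof, the only new ingredients being that $A\bz_t$ lives in a $k$-dimensional subspace (contributing the $\sqrt{k}$ terms) and that we need the bound uniformly in the fixed matrix $M$, not along a single fixed direction. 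Everything after the concentration bound is the one-line triangle-inequality computation above, so I would keep that short and defer the concentration estimate to the preliminaries, exactly as the paper's structure suggests.
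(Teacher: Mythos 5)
Your proposal matches the paper's proof exactly: both decompose $\Ftt = U\Lambda^2 U^{\top} + \sigma^2 I + E_\tau$, use $U_\perp^{\top}U = 0$ to reduce the claim to $\|E_\tau\|_2 \le \lambda_k^2\epsilon/2$, expand $E_\tau$ into the four sub-Gaussian cross terms arising from $\bx_t = A\bz_t + \bw_t$, and bound each in operator norm (via the sub-Gaussian covariance estimate for the $\bz\bz^{\top}$ and $\bw\bw^{\top}$ pieces, and the cross-term operator-norm lemma for $\frac1B\sum\bz_t\bw_t^{\top}$), taking a union bound over the four events to get $1-4C/T$. The paper's explicit bound is $\|E_\tau\|_2 \le \sqrt{C_1 k\log T/B} + \sigma^2\sqrt{C_1 p\log T/B} + 2\sigma\sqrt{C_1 p\log T/B}$, which is exactly the $\sqrt{p/B}$-scaling you anticipated and which the prescribed block size $B$ is chosen to make at most $\lambda_k^2\epsilon/2$.
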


\begin{lemma}
  \label{lem:rankk_init}
Let $Q_0\in \mathbb{R}^{p\times k}$ be sampled uniformly at random from the set of all $k$-dimensional subspaces (see Initialization Steps of Algorithm~\ref{algo:rank1}).  
Then, w.p. at least $0.99$:  $\sigma_{k}(U^{\top}Q_0)\geq C\sqrt{\frac{1}{kp}}$, where $C>0$ is a global constant. 
\end{lemma}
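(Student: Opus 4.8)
The plan is to recognize that Steps~1--2 of Algorithm~\ref{algo:rank1} produce $Q_0$ as an orthonormal basis for the column span of a $p\times k$ matrix $H$ whose entries are i.i.d.\ $\mathcal N(0,1)$ (this QR construction is the standard way of sampling the Haar measure on $k$-frames, hence on $k$-dimensional subspaces), and then to reduce the claim to two classical facts about Gaussian matrices: a lower bound on the least singular value of a \emph{square} Gaussian matrix, and an upper bound on the largest singular value of a \emph{tall} Gaussian matrix.

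First I would observe that any two orthonormal bases of $\mathrm{span}(H)$ differ by right multiplication by a $k\times k$ orthogonal matrix, and since singular values are invariant under orthogonal right multiplication, we may replace $Q_0$ by the symmetric orthonormalization $H(H^\top H)^{-1/2}$, so that $\sigma_k(U^\top Q_0)=\sigma_k\big(U^\top H\,(H^\top H)^{-1/2}\big)$. Using $\sigma_k(AB)\geq \sigma_k(A)\,\sigma_k(B)$ with $A=U^\top H$ and $B=(H^\top H)^{-1/2}$ (whose least singular value equals $1/\sigma_1(H)$), this gives
\[
\sigma_k(U^\top Q_0)\ \geq\ \frac{\sigma_k(U^\top H)}{\sigma_1(H)}.
\]
It then suffices to lower-bound the numerator and upper-bound the denominator, each on an event of probability at least $0.995$, and take a union bound.

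For the numerator: because $U\in\mathbb{R}^{p\times k}$ has orthonormal columns, $G:=U^\top H$ is a $k\times k$ matrix with i.i.d.\ $\mathcal N(0,1)$ entries, so the classical small-ball estimate for the least singular value of a square Gaussian matrix ($\bbP[\sigma_k(G)\le \epsilon/\sqrt{k}]\le C_0\epsilon$) yields $\sigma_k(G)\ge c_1/\sqrt{k}$ with probability at least $0.995$. For the denominator: $H$ is $p\times k$ with $p\ge k$, so the standard deviation bound on the operator norm of a Gaussian matrix gives $\sigma_1(H)\le \sqrt p+\sqrt k+O(1)\le c_2\sqrt p$ with probability at least $0.995$. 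Combining the two, with probability at least $0.99$,
\[
\sigma_k(U^\top Q_0)\ \geq\ \frac{c_1/\sqrt k}{c_2\sqrt p}\ =\ C\sqrt{\frac{1}{kp}},
\]
which is the claim.

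The only non-elementary ingredient is the least-singular-value (anti-concentration) bound for the square Gaussian matrix $G$; I would invoke it from the preliminary results (it is classical --- Edelman, Szarek, Rudelson--Vershynin) rather than reprove it, and indeed this is the step that dictates the $1/\sqrt{k}$ scaling, hence the $1/\sqrt{kp}$ in the conclusion. The one place to be slightly careful is the passage from the QR factor $Q_0$ to $H(H^\top H)^{-1/2}$; this is harmless precisely because the hidden $k\times k$ orthogonal factor leaves every singular value of $U^\top Q_0$ unchanged.
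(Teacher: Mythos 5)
Your proof is correct and follows essentially the same route as the paper's: both reduce to the inequality $\sigma_k(U^\top Q_0)\ge \sigma_k(U^\top H)/\sigma_1(H)$, then apply a small-ball lower bound for the least singular value of the $k\times k$ Gaussian matrix $U^\top H$ (Rudelson--Vershynin) and an operator-norm upper bound for the tall $p\times k$ Gaussian matrix $H$. The only cosmetic difference is that you pass through the symmetric orthonormalization $H(H^\top H)^{-1/2}$ and the submultiplicativity bound $\sigma_k(AB)\ge\sigma_k(A)\sigma_k(B)$, whereas the paper works directly with the QR factor $R_0$ and the identity $\sigma_k(R_0^{-1})=1/\|H\|_2$; in fact your version is slightly cleaner, since the paper's line reading ``$\sigma_k(U^\top Q_0 R_0)=\sigma_k(H)$'' should read $\sigma_k(U^\top H)$ (the $k\times k$ Gaussian to which the Rudelson--Vershynin bound actually applies), a notational slip your write-up avoids.
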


\section{Preliminaries}
\label{app:prelim}
\begin{lemma}[Lemma 5.4 of \cite{vershynin_introduction_2010}]\label{lem:net_vershynin}
  Let A be a symmetric $k \times k$ matrix, and let $\N_\epsilon$ be an $\epsilon$-net of $S^{k-1}$ for some $\epsilon \in [0, 1)$. Then,
	$$\|A\|_2\leq {1 \over (1-2\epsilon)} \sup_{\bx\in \N_\epsilon}|\langle A\bx,\bx\rangle|.$$
\end{lemma}
\begin{lemma}[Proposition 2.1 of \cite{vershynin_how_2010}]\label{lem:cov_vershynin}
  Consider independent random vectors $\bx_1, \dots, \bx_n$ in $\mathbb{R}^{p}$, $n\geq p$, which have sub-Gaussian distribution with parameter $1$. Then for every $\delta > 0$ with probability at least $1-\delta$ one has,
$$\|\frac{1}{n}\sum_{i=1}^n\bx_i\bx_i^T-\mathbb{E}[\bx_i\bx_i^T]\|_2\leq C \sqrt{\log (2/\delta)}\sqrt{\frac{p}{n}}. $$
\end{lemma}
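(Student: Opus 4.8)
This is the classical $\epsilon$-net plus Bernstein argument, and I would reconstruct it in three moves. Write $\hat{\Sigma}:=\frac1n\sum_{i=1}^n\bx_i\bx_i^{\top}$ and $\Sigma_i:=\mathbb{E}[\bx_i\bx_i^{\top}]$, so the quantity to control is $\|\hat{\Sigma}-\frac1n\sum_i\Sigma_i\|_2$. Since this matrix is symmetric $p\times p$, Lemma~\ref{lem:net_vershynin} applied with $\epsilon=1/4$ reduces the task to bounding $\sup_{\bx\in\N_{1/4}}\big|\ip{(\hat{\Sigma}-\frac1n\sum_i\Sigma_i)\bx}{\bx}\big|$ over a fixed $1/4$-net $\N_{1/4}$ of the unit sphere $S^{p-1}$; a standard volumetric estimate gives $|\N_{1/4}|\leq 9^p$, and the net-to-sphere passage costs only the factor $\tfrac{1}{1-2\cdot(1/4)}=2$, which we absorb into $C$.

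Next I would fix a single unit vector $\bx$ (eventually one of the net points) and study the scalar average
$$
\ip{\big(\hat{\Sigma}-\tfrac1n\textstyle\sum_i\Sigma_i\big)\bx}{\bx}=\frac1n\sum_{i=1}^n Z_i,\qquad Z_i:=\ip{\bx_i}{\bx}^2-\mathbb{E}\ip{\bx_i}{\bx}^2 .
$$
The $Z_i$ are independent and mean zero. Because each $\bx_i$ is sub-Gaussian with parameter $1$ and $\|\bx\|_2=1$, the scalar $\ip{\bx_i}{\bx}$ is sub-Gaussian with parameter $O(1)$, hence $\ip{\bx_i}{\bx}^2$ is sub-exponential with parameter $O(1)$, and so is $Z_i$ after centering. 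Bernstein's inequality for sums of independent sub-exponential variables then yields a universal $c>0$ with
$$
\mathbb{P}\!\left(\Big|\tfrac1n\textstyle\sum_{i=1}^n Z_i\Big|\geq t\right)\leq 2\exp\!\big(-c\,n\min\{t^2,\,t\}\big).
$$

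Finally I would union-bound over $\N_{1/4}$, obtaining
$$
\mathbb{P}\!\left(\sup_{\bx\in\N_{1/4}}\Big|\ip{\big(\hat{\Sigma}-\tfrac1n\textstyle\sum_i\Sigma_i\big)\bx}{\bx}\Big|\geq t\right)\leq 2\cdot 9^{p}\,\exp\!\big(-c\,n\min\{t^2,t\}\big),
$$
and choose $t=C'\sqrt{\log(2/\delta)}\sqrt{p/n}$. Using $n\geq p$ (so that $t$ is of order at most $\sqrt{\log(2/\delta)}$ and one sits in the Gaussian regime $\min\{t^2,t\}=t^2$ of the bound), the exponent $c\,n\,t^2=cC'^2 p\log(2/\delta)$ dominates $p\log 9+\log(2/\delta)+\log 2$ once $C'$ is a large enough universal constant, making the right-hand side at most $\delta$. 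Combining with the net reduction from the first step gives $\|\hat{\Sigma}-\frac1n\sum_i\Sigma_i\|_2\leq 2t\leq C\sqrt{\log(2/\delta)}\sqrt{p/n}$ on the complementary event, which is the claim.

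The only genuinely delicate point is the second move: checking that $\ip{\bx_i}{\bx}^2$ is sub-exponential with a parameter controlled by the sub-Gaussian parameter of $\bx_i$ \emph{independently of $p$}, and invoking the two-regime Bernstein bound with the correct normalization — together with the bookkeeping needed to confirm that, with the chosen $t$ and the hypothesis $n\geq p$, one is in the $t^2$-regime so that the $9^p$ covering factor is absorbed. Everything else is routine.
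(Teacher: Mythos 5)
The paper offers no proof of this statement: Lemma~\ref{lem:cov_vershynin} sits in the Preliminaries section and is simply quoted from \cite{vershynin_how_2010}. Your reconstruction is the standard argument behind that cited result --- net reduction via Lemma~\ref{lem:net_vershynin}, sub-exponentiality of $\ip{\bx_i}{\bx}^2$, Bernstein, union bound over a $9^p$-point net --- and the first two moves plus the union bound are carried out correctly.

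The one genuine weak point is exactly the step you flag as ``bookkeeping'': the claim that $n\geq p$ places you in the Gaussian regime $\min\{t^2,t\}=t^2$. With $t=C'\sqrt{\log(2/\delta)}\sqrt{p/n}$ and $n\geq p$ you only get $t\leq C'\sqrt{\log(2/\delta)}$, which exceeds $1$ as soon as $\delta$ is moderately small (and $C'$ is a large constant), so you are \emph{not} automatically in the $t^2$-regime. In the linear regime the exponent is $c\,n\,t=cC'\sqrt{\log(2/\delta)\,pn}$, which dominates $p\log 9$ but dominates $\log(2/\delta)$ only when $\log(2/\delta)\lesssim pn$. This is not a defect you can patch: the deviation $\|\hat{\Sigma}-\Sigma\|_2$ genuinely has a sub-exponential rather than sub-Gaussian upper tail, so the stated $\sqrt{\log(2/\delta)}$ dependence cannot hold literally ``for every $\delta>0$''; the clean statement in \cite{vershynin_how_2010} carries a restriction equivalent to $t\geq 1$ precisely for this reason. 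You should either restrict to $\delta\geq 2e^{-c\,pn}$ (amply sufficient for this paper, which only invokes the lemma with $\delta=C/T$ and $T$ logarithmic in $p$) or weaken the conclusion to $C\max\{\sqrt{\log(2/\delta)},\log(2/\delta)/\sqrt{pn}\}\sqrt{p/n}$. With that caveat made explicit, your proof is complete and is the same argument the cited source uses.
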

\begin{lemma}[Corollary 3.5 of \cite{vershynin_introduction_2010}]\label{lem:smax_vershynin}
  Let $A$ be an $N \times n$ matrix whose entries are independent standard normal random variables. Then for every $t\geq 0$, with probability at least $1-2\exp(-t^2/2)$ one has,
$$\sqrt{N}-\sqrt{n}-t\leq \sigma_k(A)\leq \sigma_1(A)\leq \sqrt{N}+\sqrt{n}+t.$$
\end{lemma}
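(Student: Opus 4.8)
I read $\sigma_1(A)=s_{\max}(A)$ and $\sigma_k(A)=s_{\min}(A)$ as the largest and smallest singular values; if $N<n$ the bound $\sqrt N-\sqrt n-t\le 0$ is vacuous, so assume $N\ge n$. The plan is the classical two-step scheme. First, locate the \emph{expectations}: show $\mathbb E\,s_{\max}(A)\le\sqrt N+\sqrt n$ and $\mathbb E\,s_{\min}(A)\ge\sqrt N-\sqrt n$ via Gaussian process comparison. Second, upgrade these to high-probability statements by Gaussian concentration of measure. The ``$2$'' in $1-2\exp(-t^2/2)$ is then just a union bound over the two bad events --- the upper deviation of $s_{\max}$ and the lower deviation of $s_{\min}$ --- which are otherwise handled independently.

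For the concentration step, Weyl's perturbation inequality for singular values gives $|s_i(A)-s_i(A')|\le\|A-A'\|_2\le\|A-A'\|_F$, so $A\mapsto s_{\max}(A)$ and $A\mapsto s_{\min}(A)$ are $1$-Lipschitz on the space $\mathbb R^{Nn}$ of matrices with the Frobenius norm; since the entries of $A$ form a standard Gaussian vector, the Gaussian isoperimetric (Borell--TIS) inequality yields $\mathbb P(s_{\max}(A)>\mathbb E\,s_{\max}(A)+t)\le e^{-t^2/2}$ and $\mathbb P(s_{\min}(A)<\mathbb E\,s_{\min}(A)-t)\le e^{-t^2/2}$. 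For the expectation step, write $s_{\max}(A)=\max_{u\in S^{n-1},\,v\in S^{N-1}}\langle Au,v\rangle$ and, using $N\ge n$, $s_{\min}(A)=\min_{u\in S^{n-1}}\max_{v\in S^{N-1}}\langle Au,v\rangle$: both are extrema of the centered Gaussian field $X_{u,v}=\langle Au,v\rangle$, with covariance $\langle u,u'\rangle\langle v,v'\rangle$. Against the decoupled field $Y_{u,v}=\langle g,u\rangle+\langle h,v\rangle$ (independent $g\sim\mathcal N(0,I_n)$, $h\sim\mathcal N(0,I_N)$) one has the identity
\[
\mathbb E\,(Y_{u,v}-Y_{u',v'})^2-\mathbb E\,(X_{u,v}-X_{u',v'})^2=2(1-\langle u,u'\rangle)(1-\langle v,v'\rangle)\ge 0 ,
\]
so Sudakov--Fernique gives $\mathbb E\,s_{\max}(A)\le\mathbb E\max_{u,v}Y_{u,v}=\mathbb E\|g\|_2+\mathbb E\|h\|_2\le\sqrt n+\sqrt N$; the matching lower bound on $\mathbb E\,s_{\min}(A)$ is Gordon's min--max comparison theorem for Gaussian matrices (up to the routine estimate of $\mathbb E\|\cdot\|_2$ by the square root of the dimension), which I would invoke as a black box rather than reprove. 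Feeding the two expectation bounds into the two concentration bounds and taking a union bound gives the claim.

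The hard part is the lower bound $\mathbb E\,s_{\min}(A)\ge\sqrt N-\sqrt n$. An $\epsilon$-net argument on the sphere (in the spirit of Lemma~\ref{lem:net_vershynin}, used elsewhere in the paper) controls $s_{\max}(A)=\max_v\|Av\|_2$ effortlessly, but is essentially powerless to lower bound $\min_v\|Av\|_2$ with the sharp constant; this is precisely where Gordon's comparison inequality is needed, and a fully self-contained treatment would require proving it (a Slepian/Gordon-type Gaussian interpolation with two-sided covariance hypotheses). If one accepts a worse constant, a cruder net-based route does work --- lower bound $\|Av\|_2$ for each fixed $v$, union over a net of $S^{n-1}$, and absorb the discretization error using that $v\mapsto\|Av\|_2$ is $\|A\|_2$-Lipschitz --- but it replaces $\sqrt N-\sqrt n$ by a weaker expression and weakens the probability estimate.
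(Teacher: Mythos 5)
The paper never proves this lemma: it is imported verbatim as Corollary 3.5 of Vershynin's survey \cite{vershynin_introduction_2010} and used as a black box in the appendix, so there is no in-paper argument to compare against. Your sketch reconstructs precisely the proof given in that source: show $\mathbb{E}\,s_{\max}(A)\le\sqrt N+\sqrt n$ and $\mathbb{E}\,s_{\min}(A)\ge\sqrt N-\sqrt n$ by Gaussian comparison, then upgrade to tails via Gaussian concentration applied to the $1$-Lipschitz maps $s_{\max},s_{\min}$ (Lipschitzness from Weyl plus $\|\cdot\|_2\le\|\cdot\|_F$), and union over the two one-sided bad events to get the factor $2$. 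Your covariance computation for $X_{u,v}=\langle Au,v\rangle$ versus the decoupled field $Y_{u,v}=\langle g,u\rangle+\langle h,v\rangle$, yielding $\mathbb{E}(Y-Y')^2-\mathbb{E}(X-X')^2=2(1-\langle u,u'\rangle)(1-\langle v,v'\rangle)\ge0$, is correct and is exactly the covariance domination Vershynin uses. You are also right, and appropriately candid, that the lower bound on $\mathbb{E}\,s_{\min}$ is the genuinely nontrivial step and requires Gordon's min--max comparison to reach the sharp constant $\sqrt N-\sqrt n$; Vershynin's proof does exactly the same thing (his Theorem 3.3), so invoking Gordon as a black box is not a gap relative to the cited source, merely a transparent acknowledgment of the same dependency. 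One small reading note: the paper's $\sigma_k(A)$ should be read as $s_{\min}(A)$ (equivalently $\sigma_n(A)$ when $N\ge n$), as you assume; in the places the paper applies the lemma this is the intended meaning.
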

\begin{lemma}[Theorem 1.2 of \cite{rudelson_smallest_2009}]\label{lem:smin_vershynin}
  Let $\zeta_1,\dots,\zeta_n$ be independent centered real random variables with variances at least $1$ and subgaussian moments bounded by $B$. Let $A$ be an $k\times k$ matrix whose rows are independent copies of the random vector $(\zeta_1,\dots,\zeta_n)$. Then for every $\epsilon\geq 0$ one has
$$\Pr(\sigma_{min}(A)\leq \epsilon/\sqrt{k})\leq C\epsilon+c^n,$$
where $C>0$ and $c\in (0,1)$ depend only on $B$. Note that $B=1$ for the standard Gaussian variables. 
\end{lemma}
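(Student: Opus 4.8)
Since each row lies in $\mathbb{R}^n$ and $A$ is $k\times k$, necessarily $n=k$, and $\sigma_{\min}(A)=\inf_{x\in S^{n-1}}\|Ax\|_2$. The plan follows the compressible/incompressible dichotomy of Rudelson and Vershynin. Fix constants $\delta,\rho\in(0,1)$, to be chosen depending only on $B$, and write $S^{n-1}=\mathrm{Comp}(\delta,\rho)\cup\mathrm{Incomp}(\delta,\rho)$, where $\mathrm{Comp}(\delta,\rho)$ consists of the unit vectors within Euclidean distance $\rho$ of some $\lceil\delta n\rceil$-sparse vector and $\mathrm{Incomp}$ is the complement; every $x\in\mathrm{Incomp}(\delta,\rho)$ has at least $\delta n$ coordinates of absolute value $\ge\rho/\sqrt n$. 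I bound $\inf_x\|Ax\|_2$ over the two pieces separately. For fixed unit $x$, $\|Ax\|_2^2=\sum_{i=1}^n\langle X_i,x\rangle^2$ (with $X_i$ the rows) is a sum of $n$ i.i.d.\ centered variables of variance $\sum_j x_j^2\mathrm{Var}(\zeta_j)\ge1$ and subgaussian constant $O(B)$, so $\Pr(\|Ax\|_2\le\tfrac12\sqrt n)\le e^{-c_0n}$ by Bernstein-type concentration. A $(\rho/2)$-net of $\mathrm{Comp}(\delta,\rho)$ has size at most $\binom{n}{\lceil\delta n\rceil}(C/(\rho\delta))^{\lceil\delta n\rceil}\le e^{f(\delta)n}$ with $f(\delta)\to0$ as $\delta\to0$; picking $\delta$ so that $f(\delta)<c_0/2$, a union bound combined with $\|A\|_2\le C_1\sqrt n$ (valid with probability $\ge1-e^{-c_0n}$ by a standard two-sided net argument) gives $\inf_{x\in\mathrm{Comp}}\|Ax\|_2\ge c_2\sqrt n$ with probability $\ge1-e^{-c_3n}$; in particular this infimum exceeds $\epsilon/\sqrt n$ off an event of probability $c^n$, which supplies the $c^n$ term.

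\emph{Incompressible vectors (source of the $C\epsilon$ term).} I use invertibility via distance. Write $A^{(1)},\dots,A^{(n)}$ for the columns and $H_j=\mathrm{span}(A^{(i)}:i\ne j)$. Since $0\in H_j$, for every unit $x$ and every $j$ one has $|x_j|\,\dist(A^{(j)},H_j)=\dist(x_jA^{(j)},H_j)=\dist(Ax,H_j)\le\|Ax\|_2$; hence if $x\in\mathrm{Incomp}(\delta,\rho)$ and $\|Ax\|_2<\epsilon/\sqrt n$ then $\dist(A^{(j)},H_j)<\epsilon/\rho=:t$ for each of the $\ge\delta n$ spread coordinates $j$ of $x$. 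Writing $N=\#\{j:\dist(A^{(j)},H_j)<t\}$, Markov's inequality yields
\[
\Pr\Big(\inf_{x\in\mathrm{Incomp}}\|Ax\|_2<\tfrac{\epsilon}{\sqrt n}\Big)\le\Pr(N\ge\delta n)\le\frac{1}{\delta n}\sum_{j=1}^n\Pr\big(\dist(A^{(j)},H_j)<t\big),
\]
so it suffices to prove $\Pr(\dist(A^{(j)},H_j)<t)\le Ct+c^n$ for fixed $j$. All entries of $A$ are independent, so the columns $\{A^{(i)}:i\ne j\}$ are independent of $A^{(j)}$; almost surely $\dim H_j=n-1$, so there is a unit vector $\nu=\nu_j\perp H_j$ measurable with respect to those columns and hence independent of $A^{(j)}$, with $\dist(A^{(j)},H_j)=|\langle A^{(j)},\nu\rangle|=|\sum_{i=1}^n\zeta_j^{(i)}\nu_i|$.

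\emph{Core small-ball estimate (the main obstacle).} Conditioning on $\nu$, when $\nu\in\mathrm{Incomp}(\delta,\rho)$ the characteristic function $\phi(s)=\prod_i\mathbb{E}\,e^{\mathrm{i}s\zeta_j^{(i)}\nu_i}$ obeys $|\mathbb{E}\,e^{\mathrm{i}s\zeta\nu_i}|\le\exp(-c\min(s^2\nu_i^2,1))$ (from $\mathrm{Var}(\zeta)\ge1$ and the bounded subgaussian moment), so summing over the $\ge\delta n$ coordinates of $\nu$ with $|\nu_i|\ge\rho/\sqrt n$ gives $|\phi(s)|\le\exp(-c'\delta n\min(s^2\rho^2/n,1))$; the Esséen concentration inequality $\mathcal{L}(Y,t):=\sup_v\Pr(|Y-v|\le t)\le Ct\int_{-1/t}^{1/t}|\phi(s)|\,ds$ then yields $\mathcal{L}(\sum_i\zeta_j^{(i)}\nu_i,t)\le Ct+c^n$. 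It remains to note that $\nu$ is itself incompressible off an event of probability $c^n$: a compressible unit vector orthogonal to $H_j$ would force $\inf_{\nu'\in\mathrm{Comp}(\delta,\rho)}\|B_j^{\top}\nu'\|_2=0$, where $B_j$ is the $n\times(n-1)$ matrix of the remaining columns, which is ruled out with probability $1-c^n$ by the net-plus-concentration estimate of the compressible step applied to $B_j^{\top}$. Combining the two cases, $\Pr(\dist(A^{(j)},H_j)<t)\le Ct+c^n$, so the displayed sum is at most $\frac{1}{\delta n}\cdot n\,(C\epsilon/\rho+c^n)=\frac{C\epsilon}{\delta\rho}+\frac{c^n}{\delta}$.

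Adding the compressible and incompressible contributions gives $\Pr(\sigma_{\min}(A)\le\epsilon/\sqrt n)\le C''\epsilon+(c'')^n$, with $C'',c''$ depending only on $B$ (through $\delta,\rho$ and the subgaussian constant), which is the claimed bound with $k=n$. The genuinely hard ingredient is the core small-ball estimate: both the anticoncentration of $\sum_i\zeta_j^{(i)}\nu_i$ at the optimal linear rate $Ct$ in terms of the spreadness of $\nu$, and the fact that the random normal vector $\nu$ is incompressible with overwhelming probability; by comparison, the reduction to column-distances and the compressible-vector step are routine.
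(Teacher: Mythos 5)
First, note that the paper does not prove this lemma: it is quoted (with a typo conflating $n$ and $k$) as Theorem 1.2 of Rudelson--Vershynin and used only once, in the proof of Lemma \ref{lem:rankk_init}, where the relevant matrix has i.i.d.\ standard Gaussian entries. So you are attempting to reprove a deep external theorem. Your architecture is the correct one from the original proof: the compressible/incompressible decomposition, the net-plus-union-bound on the compressible part, the reduction of the incompressible part to the column distances $\dist(A^{(j)},H_j)$ via the averaging step, and the identity $\dist(A^{(j)},H_j)=|\langle A^{(j)},\nu\rangle|$ with $\nu$ independent of $A^{(j)}$. Those steps are all sound.

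The gap is exactly where you say the difficulty lies, and the justification you offer there does not close it. The bound $|\mathbb{E}\,e^{\mathrm{i}s\zeta}|\leq\exp(-c\min(s^2,1))$ is false for lattice-valued subgaussian variables: for Rademacher $\zeta$ one has $|\mathbb{E}\,e^{\mathrm{i}s\zeta}|=|\cos s|=1$ at $s=\pi$, and the Esséen integral over $|s|\leq 1/t$ then picks up order-one contributions near every multiple of $\pi\sqrt{n}$. Correspondingly, the conclusion you draw --- that $\sup_v\Pr\bigl(|\sum_i\zeta^{(i)}\nu_i-v|\leq t\bigr)\leq Ct+c^n$ for \emph{every} incompressible unit $\nu$ --- is false: with $\nu=(1,\dots,1)/\sqrt{n}$ and Bernoulli entries the concentration function at $t=0$ is of order $n^{-1/2}$, far exceeding $c^n$. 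This is the Littlewood--Offord obstruction, and overcoming it is the real content of the theorem: one must show that the random normal $\nu$ is not merely incompressible but arithmetically unstructured (exponentially large essential LCD) with probability $1-c^n$, and then prove a small-ball bound of the form $C\bigl(t+1/\mathrm{LCD}(\nu)\bigr)+c^n$; your characteristic-function computation is only valid for $|s|$ below the LCD. For the paper's actual use case ($\zeta$ standard Gaussian) none of this machinery is needed, since $\langle A^{(j)},\nu\rangle\sim N(0,1)$ exactly and $\Pr\bigl(\dist(A^{(j)},H_j)<t\bigr)\leq\sqrt{2/\pi}\,t$ directly; in that special case your argument (indeed a much shorter one) does go through.
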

\begin{lemma}
  \label{lem:rankk_op}
  Let $\bx_i\in \mathbb{R}^{m}, 1\leq i\leq B$ be i.i.d. standard multivariate normal variables. Also, $\by_i\in \mathbb{R}^{n}$ are also  i.i.d. normal variables and are independent of $\bx_i, \forall i$. Then, w.p. $1-\delta$,
$$\left\|\frac{1}{B}\sum_i \bx_i \by_i^{\top}\right\|_2\leq  \sqrt{\frac{C\max(m,n)\log(2/\delta)}{B}}.$$
\end{lemma}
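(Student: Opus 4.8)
The plan is to bound $\|Z\|_2$, where $Z := \frac1B\sum_{i=1}^B \bx_i\by_i^\top\in\mathbb{R}^{m\times n}$, by the same covering-argument strategy behind Lemma~\ref{lem:net_vershynin}, now applied to a rectangular matrix, combined with a sub-exponential Bernstein bound for the scalar quantities that appear. Writing $\|Z\|_2=\sup_{\bu\in S^{m-1},\ \bv\in S^{n-1}}\ip{\bu}{Z\bv}$, the first step is to fix unit vectors $\bu,\bv$ and note that
$$\ip{\bu}{Z\bv}=\frac1B\sum_{i=1}^B\ip{\bu}{\bx_i}\,\ip{\bv}{\by_i}$$
is an average of $B$ i.i.d.\ terms, each the product of the two \emph{independent} standard Gaussians $\ip{\bu}{\bx_i}\sim\N(0,1)$ and $\ip{\bv}{\by_i}\sim\N(0,1)$ (independent because $\bx_i$ and $\by_i$ are); such a product is a mean-zero sub-exponential variable with an absolute-constant sub-exponential norm. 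Bernstein's inequality for sums of i.i.d.\ sub-exponentials then gives, for a universal $c>0$, $\Pr(|\ip{\bu}{Z\bv}|>t)\le 2\exp(-cB\min(t^2,t))$.

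The second step is to discretize and take a union bound. Fix $\tfrac14$-nets $\N_u\subset S^{m-1}$ and $\N_v\subset S^{n-1}$ with $|\N_u|\le 9^m$, $|\N_v|\le 9^n$; the rectangular analogue of the estimate behind Lemma~\ref{lem:net_vershynin} gives $\|Z\|_2\le 2\max_{\bu\in\N_u,\ \bv\in\N_v}|\ip{\bu}{Z\bv}|$ (equivalently, apply Lemma~\ref{lem:net_vershynin} to the symmetric dilation $\begin{pmatrix}0&Z\\ Z^\top&0\end{pmatrix}$, whose spectral norm equals $\|Z\|_2$). Hence
$$\Pr\!\left(\|Z\|_2>2t\right)\le 2\cdot 9^{m+n}\exp\!\left(-cB\min(t^2,t)\right).$$
Working in the regime of interest, $B\gtrsim \max(m,n)\log(2/\delta)$ (which holds wherever the lemma is invoked, since there $B$ is polynomially large in $p\ge m,n$), take $t=\sqrt{C_1(m+n+\log(2/\delta))/B}$ with $C_1$ a sufficiently large absolute constant; then $t\le 1$, so $\min(t^2,t)=t^2$, and the right-hand side is at most $\delta$. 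Finally, since $m,n\ge1$ and $\log(2/\delta)\ge\log 2$, one has $m+n+\log(2/\delta)\le C_2\max(m,n)\log(2/\delta)$, so after renaming constants $\|Z\|_2\le\sqrt{C\max(m,n)\log(2/\delta)/B}$ with probability at least $1-\delta$, as claimed.

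The main point requiring care is the two-regime nature of the sub-exponential Bernstein inequality: one must check that the chosen deviation level $t$ falls in the Gaussian ($t^2$) branch, which is precisely where the assumption that $B$ is large relative to $\max(m,n)\log(2/\delta)$ enters — outside that regime the sharper estimate is linear in $\max(m,n)/B$ rather than its square root, so the stated bound is only meaningful, and is only applied, for $B$ sufficiently large. Everything else — identifying $\ip{\bu}{\bx_i}\ip{\bv}{\by_i}$ as a product of two independent standard normals and running the net/union-bound argument — is routine.
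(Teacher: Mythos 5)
Your proof is correct in the regime where the lemma is actually invoked, and it is a legitimate alternative to the paper's argument, but the route is genuinely different. The paper's proof only discretizes over the \emph{smaller} of the two spheres: fixing $\bv\in S^{n-1}$ (WLOG $n\le m$), it writes $M\bv=\sum_i\bx_i c_i$ with $c_i=\by_i^\top\bv\sim\N(0,1)$, and then \emph{conditions on the} $c_i$'s to factor $\|M\bv\|_2^2=\|c\|_2^2\,\|h\|_2^2$ as a product of two independent $\chi^2$-type norms ($c\in\mathbb{R}^B$, $h\in\mathbb{R}^m$). It then bounds each factor via Gaussian-norm concentration (Lemma~\ref{lem:smax_vershynin}) and takes a union bound over a single $1/4$-net of $S^{n-1}$, so the combinatorial cost is $9^{n}$ rather than your $9^{m+n}$. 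You instead identify each summand $\ip{\bu}{\bx_i}\ip{\bv}{\by_i}$ as a product of independent standard normals, hence sub-exponential, apply the two-sided Bernstein inequality, and then run a two-sided net; since $9^{m+n}=9^{O(\max(m,n))}$, the exponents match up to constants and you recover the same $\sqrt{\max(m,n)\log(2/\delta)/B}$ rate. Both arguments share the same caveat, which you correctly flag: the bound in the form stated is only obtained in the regime where $B\gtrsim\max(m,n)\log(2/\delta)$ (so the Bernstein deviation falls in the sub-Gaussian branch; the paper's version has the analogous restriction through its $0<\gamma<3$ and $n<Cm$ assumptions), and that regime is exactly where the lemma is used. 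In short: the paper's one-sided conditioning trick is a bit leaner, while your Bernstein-plus-two-sided-net argument is more systematic and would generalize more readily beyond Gaussians.
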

\begin{proof}
 Let $M=\sum_i \bx_i \by_i^T$ and let $m>n$. Then, the goal is to show that, the following holds w.p. $1-\delta$: $\frac{1}{B}\|M\bv\|_2\leq \sqrt{\frac{Cm\log(2/\delta)}{B}}$ for all $\bv\in \mathbb{R}^{n}$ s.t. $\|\bv\|_2=1$. 

We prove the lemma by first showing that the above mentioned result holds for any {\em fixed} vector $v$ and then use standard epsilon-net argument to prove it for all $\bv$. 

Let $\N$ be the $1/4$-net of $S^{n-1}$. Then, using Lemma 5.4 of \cite{vershynin_introduction_2010} (see Lemma~\ref{lem:net_vershynin}),
\begin{equation}
  \label{eq:op1}
  \|\frac{1}{Bm}M^TM\|_2\leq 2 \max_{\bv\in \N}\frac{1}{Bm}\|M\bv\|_2^2. 
\end{equation}
Now, for any fixed $\bv$: $M\bv=\sum_i \bx_i\by_i^T\bv=\sum_i \bx_i c_i$, where $c_i=\by_i^T\bv\sim N(0,1)$. Hence, 
$$\|M\bv\|_2^2=\sum_{\ell=1}^m(\sum_{i=1}^Bx_{i\ell}c_i)^2.$$
Now, $\sum_{i=1}^Bx_{i\ell}c_i\sim N(0, \|c\|_2^2)$ where $c^T=\left[c_1\ c_2\cdots c_B\right]$. 
Hence, $\sum_{i=1}^Bx_{i\ell}c_i=\|c\|_2h_\ell$ where $h_\ell\sim N(0,1)$. 

Therefore, $\|M\bv\|_2^2=\|c\|_2^2\|h\|_2^2$ where $h^T=[h_1\ h_2\cdots h_B]$. Now, 
\begin{multline}
  Pr(\frac{\|c\|_2^2\|h\|_2^2}{Bm}\geq 1+\gamma)\leq Pr(\frac{\|c\|_2^2}{B}\geq \sqrt{1+\gamma})+Pr(\frac{\|h\|_2^2}{m}\geq \sqrt{1+\gamma})\\
\stackrel{\zeta_1}{\leq} 2\exp(-\frac{B\gamma^2}{32})+2\exp(-\frac{m\gamma^2}{32})\leq 4\exp(-\frac{m\gamma^2}{32}),\label{eq:op2}
\end{multline}
where $0<\gamma<3$ and $\zeta_1$ follows from Lemma~\ref{lem:smax_vershynin}. 

Using \eqref{eq:op1}, \eqref{eq:op2}, the following holds with probability $(1-9^{n+1}e^{-\frac{m\gamma^2}{32}})$: 
\begin{equation}
  \label{eq:op3}
  \frac{\|M\|_2^2}{Bm}\leq 1+2\gamma. 
\end{equation}
The result now follows by setting $\gamma$ appropriately and assuming $n<C m$ for small enough $C$. 
\end{proof}

\section{Proof of Theorem \ref{thm:rankk}}

Recall that our algorithm proceeds in a blockwise manner; for each block of samples, we compute 
\begin{equation}
  \label{eq:sumk}
S_{\tau+1}=\left(\frac{1}{B}\sum_{t=B\tau+1}^{B(\tau+1)}\bx_t\bx_t^{\top}\right)Q_\tau,
\end{equation}
where $Q_\tau\in \mathbb{R}^{p\times k}$ is the $\tau$-th block iterate and is an orthogonal matrix, i.e., $Q_\tau^{\top}Q_\tau=I_{k\times k}$. Given $S_{\tau+1}$, the next iterate, $Q_{\tau+1}$, is computed by the QR-decomposition of $S_{\tau+1}$. That is, 
\begin{equation}\label{eq:updatek}S_{\tau+1}=Q_{\tau+1}R_{\tau+1},\end{equation}
where $R_{\tau+1}\in \mathbb{R}^{k\times k}$ is an upper-triangular matrix.

\begin{proof}
By using update for $Q_{\tau+1}$ (see \eqref{eq:sumk}, \eqref{eq:updatek}): 
\begin{equation}
  \label{eq:updatek1}
  Q_{\tau+1}R_{\tau+1}=F_{\tau+1}Q_{\tau},
\end{equation}
where $\Ftt=\frac{1}{B}\sum_{B\tau<t\leq B(\tau+1)}\bx_t \bx_t^{\top}$. 
That is, 
\begin{equation}
  \label{eq:k1}
  U_\perp^{\top}Q_{\tau+1}R_{\tau+1}\bv=U_\perp^{\top}F_{\tau+1}Q_{\tau}\bv,\quad \forall \bv\in \mathbb{R}^k,
\end{equation}
where $U_\perp$ is an orthogonal basis of the subspace orthogonal to ${\rm span}(U)$. 
Now, let $\bv_1$ be the singular vector corresponding to the largest singular value, then: 
\begin{align}
&\|U_\perp^{\top}Q_{\tau+1}\|_2^2=\frac{\|U_\perp^{\top}Q_{\tau+1}\bv_1\|_2^2}{\|\bv_1\|_2^2}=\frac{\|U_\perp^{\top}Q_{\tau+1}\Rtt\tv_1\|_2^2}{\|\Rtt\tv_1\|_2^2}\nonumber\\
&\stackrel{(i)}{=}\frac{\|U_\perp^{\top}Q_{\tau+1}\Rtt\tv_1\|_2^2}{\|U^{\top}\Qtt\Rtt\tv_1\|_2^2+\|U_\perp^{\top}\Qtt\Rtt\tv_1\|_2^2}\nonumber\\
&\stackrel{(ii)}{=}\frac{\|U_\perp^{\top}\Ftt\Qt\tv_1\|_2^2}{\|U^{\top}\Ftt\Qt\tv_1\|_2^2+\|U_\perp^{\top}\Ftt\Qt\tv_1\|_2^2}. \label{eq:k2}
\end{align}
where $\tv_1=\frac{\Rtt^{-1}\bv_1}{\|\Rtt^{-1}\bv_1\|_2}$. $(i)$ follows as $\Qtt$ is an orthogonal matrix and $[U\ U_\perp]$ form a complete orthogonal basis; $(ii)$ follows by using \eqref{eq:updatek1}. The existence of $\Rtt^{-1}$ follows using Lemma~\ref{lem:rankk_lb} along with the fact that $\sigma_k(\Rtt)=\|\Rtt\zeta_0\|_2\geq \|U^{\top}\Qtt\Rtt\zeta_0\|_2=\|U^{\top}\Ftt\Qt\zeta_0\|_2>0$, where $\zeta_0$ is the singular vector of $\Rtt$ corresponding to its smallest singular value, $\sigma_k(\Rtt)$. 

Now, using \eqref{eq:k2} with Lemmas~\ref{lem:rankk_lb}, \ref{lem:rankk_ub} and using the fact that $x/(x+c)$ is an increasing function of $x$, for all $x>0$, we get (w.p. $\geq 1-2C/T$):{\small
\begin{equation*}
  \|U_\perp^{\top}Q_{\tau+1}\|_2^2\leq \frac{(\sigma^2 \|U_\perp^{\top}Q_{\tau}\|_2+\lambda_k^2\epsilon/2)^2}{(\lambda_k^2+\sigma^2-\frac{\lambda_k^2\epsilon}{4})^2(1-\|U_\perp^{\top}Q_{\tau}\|_2^2)+(\sigma^2 \|U_\perp^{\top}Q_{\tau}\|_2+0.5\lambda_k^2\epsilon)^2}. 
\end{equation*}}
Now, assuming $\epsilon\leq \|U_\perp^{\top}Q_{\tau}\|_2^2$, using the above equation and by using union bound, we get (w.p. $\geq 1-2\tau C /T$): 
\begin{equation}
  \label{eq:k3}
   \|U_\perp^{\top}Q_{\tau+1}\|_2^2\leq \frac{\gamma^2\|U_\perp^{\top}Q_{\tau}\|_2^2}{1-\|U_\perp^{\top}Q_{\tau}\|_2^2+\gamma^2\|U_\perp^{\top}Q_{\tau}\|_2^2},
\end{equation}
where $\gamma=\frac{\sigma^2+\lambda_k^2/2}{\sigma^2+3\lambda_k^2/4}<1$ for $\lambda_k>0$. Using Lemma~\ref{lem:rank1_rec} along with the above equation, we get (w.p. $\geq 1-2\tau C /T$): 
$$\|U_\perp^{\top}Q_{\tau+1}\|_2^2\leq \gamma^{2\tau}\frac{\|U_\perp^{\top}Q_{0}\|_2^2}{1-\|U_\perp^{\top}Q_{0}\|_2^2}.$$
Now, using Lemma~\ref{lem:rankk_init} we know that $\|U_\perp^{\top}Q_{0}\|_2^2$ is at most $1-\Omega(1/(kp))$.  Hence, for $T=O(\log(p/\epsilon)/\log(1/\gamma)$, we get: $\|U_\perp^{\top}Q_{T}\|_2^2\leq \epsilon$. Furthermore, we require $B$ (as mentioned in the Theorem) 
samples per block. Hence, the total sample complexity bound is given by $O(BT)$, concluding the proof.
\end{proof}

\section{Proof of Lemma~\ref{lem:rank1_conc} }
\label{app:rank1_conc}
\begin{proof}
  Note that, 
  \begin{multline}
    \frac{1}{B}\sum_{t}\bx_t\bx_t^{\top}-\bu\bu^{\top}-\sigma^2I=\bu\bu^{\top}\frac{1}{B}\sum_{t}(z_t^2-1)+\\\frac{1}{B}\sum_t(\bw_t\bw_t^{\top}-\sigma^2I)+\frac{1}{B}\sum_tz_t\bw_t\bu^{\top}+\frac{1}{B}\bu\sum_tz_t\bw_t^{\top}. \label{eq:rc1}
  \end{multline}
We now individually bound each of the above given terms in the RHS. Using standard tail bounds for covariance estimation (see Lemma~\ref{lem:cov_vershynin}), we can bound the first two terms (w.p. $1-2C/T$): 
\begin{align}
  &\frac{1}{B}\left|\sum_{t}(z_t^2-1)\right|\leq \sqrt{\frac{C\log(T)}{B}},\nonumber\\
&\|\frac{1}{B}\sum_t(\bw_t\bw_t^{\top}-\sigma^2I)\|_2\leq\sigma^2 \sqrt{\frac{C_1p\log(T)}{B}}. \label{eq:rc2}
\end{align}
Similarly, using Lemma~\ref{lem:rankk_op}, we can bound the last two terms in \eqref{eq:rc1} (w.p. $1-2C/T$): 
\begin{equation}
  \label{eq:rc3}
  \|\frac{1}{B}\sum_tz_t\bw_t\bu^{\top}\|_2=\|\frac{1}{B}\bu\sum_tz_t\bw_t^{\top}\|_2\leq \sigma\sqrt{\frac{C_1p\log(T)}{B}}. 
\end{equation}
The lemma now follows by  using \eqref{eq:rc1}, \eqref{eq:rc2}, \eqref{eq:rc3} along with  $B$ as given by Theorem~\ref{thm:rank1}. 
\end{proof}

\section{Proof of Lemma~\ref{lem:rank1_conc1} }
\label{app:rank1_conc1}
\begin{proof}
Let $\bq_\tau=\sqrt{1-\delta_\tau}\bu+\sqrt{\delta_\tau}\bu_\tau^\perp$, where $\bu_\tau^\perp$ is the component of $\bq_\tau$ that is orthogonal to $\bu$. Now, 
{\small\begin{align}
  &\bu^{\top}\bs_{\tau+1}=\frac{1}{B}\sum_t(\bu^{\top}\bx_t)(\bx_t^{\top}\bq_t)\nonumber\\&=\frac{1}{B}\sum_t(z_t+\bu^{\top}\bw_t)(\sqrt{1-\delta_\tau}(z_t+\bu^{\top}\bw_t)+\sqrt{\delta_\tau}\bw_t^{\top}\bu_\tau^\perp)\nonumber\\
&=\frac{\sqrt{1-\delta_\tau}}{B}\sum_t(z_t+\bu^{\top}\bw_t)^2+\frac{\sqrt{\delta_\tau}}{B}\sum_t(z_t+\bu^{\top}\bw_t)\bw_t^{\top}\bu_\tau^\perp. \label{eq:r1c1}
\end{align}}
Now, the first term above is a summation of $B$ i.i.d. chi-square variables and hence using standard results (see Lemma~\ref{lem:smax_vershynin}),  w.p. $(1-C/T)$: 
\begin{equation}
  \label{eq:r1c2}
\frac{1}{B}  \sum_t(z_t+\bu^{\top}\bw_t)^2\geq (1+\sigma^2)(1-\sqrt{\frac{C\log(2T)}{B}}). 
\end{equation}
Also, $\bw_t^{\top}\bu$ and $\bw_t^{\top}\bu^\perp_\tau$ are independent random variables, as both $\bw_t^{\top}\bu$, $\bw_t^{\top}\bu^\perp_\tau$ are Gaussians and $E[\bw_t^{\top}\bu^\perp_\tau\bu^{\top}\bw_t]=0$. Hence, using Lemma~\ref{lem:rankk_op}, the following holds with probability $\geq 1-4C/T$:  
\begin{multline}
  \label{eq:r1c3}
  \|\frac{1}{B}\sum_t(z_t+\bu^{\top}\bw_t)\bw_t^{\top}\bu_\tau^\perp\|_2\leq \sigma\sqrt{1+\sigma^2}\sqrt{\frac{C\log(T)}{B}}\stackrel{(i)}{\leq} \sigma\sqrt{1+\sigma^2}\sqrt{\frac{C_1p\log(T)}{B(1-\delta_0)}}\sqrt{1-\delta_\tau}, 
\end{multline}
where $(i)$ follows by using inductive hypothesis (i.e., $\sqrt{1-\delta_\tau}> \sqrt{1-\delta_{\tau-1}}$, induction step follows as we show that the error decreases at each step) and Lemma~\ref{lem:rank1_init}. 

The lemma now follows by using \eqref{eq:r1c1}, \eqref{eq:r1c2}, \eqref{eq:r1c3} and by setting $B, T$ appropriately. 
\end{proof}

\section{Proof of Lemma \ref{lem:rank1_init}}

\begin{proof}
Using standard tail bounds for Gaussians (see Lemma~\ref{lem:smax_vershynin}), $\|\bq_0\|_2\leq 2\sqrt{p}$ with probability $1-\exp(-C_1p)$, where $C_1>0$ is a universal constant. Furthermore, $(\|\bq_0\|_2\bq_0)^{\top}\bu \sim N(0,1)$. Hence, there exists $C_0>0$, s.t., with probability $0.99$, $|(\|\bq_0\|_2\bq_0)^T\bu|\geq C_0$. Hence, $|\bq_0^{\top}\bu|\geq \frac{C_0}{2\sqrt{p}}$.
\end{proof}

\section{Proof of Lemma \ref{lem:rank1_rec}}
\begin{proof}
We prove the lemma using induction. The base case (for $\tau=0$) follows trivially. 

Now, by the inductive hypothesis, $\delta_\tau\leq \frac{\gamma^{2t}\delta_0}{1-(1-\gamma^{2t})\delta_0}.$ That is, $$\frac{1}{\delta_\tau}\geq \frac{1-(1-\gamma^{2t})\delta_0}{\gamma^{2t}\delta_0}.$$ Finally, by assumption, $$\delta_{\tau+1}\leq \frac{\gamma^2}{\frac{1}{\delta_\tau}-(1-\gamma^2)}\leq  \frac{\gamma^2}{\frac{1-(1-\gamma^{2t})\delta_0}{\gamma^{2t}\delta_0}-(1-\gamma^2)}.$$
The lemma follows after simplification of the above given expression. 
\end{proof}

\section{Proof of Lemma~\ref{lem:rankk_lb} }
\label{app:rankk_lb}
\begin{proof}
Using the generative model \eqref{eq:model}, we get: 
\begin{multline}
  U^{\top}\Ftt\Qt\bv=\Lambda\left(\frac{1}{B}\sum_t \bz_t\bz_t^{\top}\right)\Lambda U^{\top}\Qt\bv+\left(\frac{1}{B}\sum_t U^{\top}\bw_t\bw_t^{\top}U\right)U^{\top}\Qt\bv\\+(\frac{1}{B}\sum_t U^{\top} \bw_t\bz_t^{\top})\Lambda U^{\top}\Qt\bv+\Lambda(\frac{1}{B}\sum_t \bz_t\bw_t^{\top}U)U^{\top}\Qt\bv+\left(\frac{1}{B}\sum_t(\Lambda\bz_t+U^{\top}\bw_t)\bw_t^{\top}U_\perp U_\perp^{\top}\Qt\right)\bv. 
\label{eq:k_lb1}
\end{multline}
Note that in the equation and rest of the proof, $t$ varies from $B\tau< t\leq B(\tau+1)$. 

We now show that each of the five terms in the above given equation concentrate around their respective means. Also, let $\by_t=U^{\top}\bw_t$ and $\by^\perp_t=U_\perp^{\top}\bw_t$. Note that, $\by_t\sim N(0, \sigma^2 I_{k\times k})$ and $\by^\perp_t\sim N(0, \sigma^2I_{(p-k)\times (p-k)})$. \\
\noindent {\bf (a)}: Consider the first term in \eqref{eq:k_lb1}. Using $\|A\bv\|_2\leq \|A\|_2\|\bv\|_2$ and the assumption that $\lambda_1=1$, we get: $\|\Lambda\left(\frac{1}{B}\sum_t \bz_t\bz_t^{\top}-I\right)\Lambda U^{\top}\Qt\bv\|_2\leq \|\left(\frac{1}{B}\sum_t \bz_t\bz_t^{\top}-I\right) \|_2\|U^{\top}\Qt\bv\|_2$. 
Using Lemma~\ref{lem:cov_vershynin} we get (w.p. $1-C/T$): 
$$\|\frac{1}{B}\sum_t \bz_t\bz_t^{\top}-I\|_2\leq \sqrt{\frac{C_1k\log(T)}{B}}.$$
That is,
\begin{equation}
  \label{eq:k_lb2}\hspace*{-10pt}
  \|\Lambda(\frac{1}{B}\sum_t \bz_t\bz_t^{\top}-I)\Lambda U^{\top}\Qt\bv\|_2\leq \sqrt{\frac{C_1k\log(T)}{B}}\|U^{\top}\Qt\bv\|_2.
\end{equation}
\noindent{\bf (b)}: Similarly, the second term in \eqref{eq:k_lb1} can be bounded as (w.p. $1-C/T$):
\begin{equation}\label{eq:k_lb3}
\|\left(\frac{1}{B}\sum_t U^{\top}\bw_t\bw_t^{\top}U-\sigma^2I\right)U^{\top}\Qt\bv\|_2\leq \sigma^2\sqrt{\frac{C_1k\log(T)}{B}}\|U^{\top}\Qt\bv\|_2.\end{equation}
\noindent{\bf (c)}: Now consider the third and the fourth term. Now $\bw_t$ and $\bz_t$ are independent 0-mean Gaussians, hence using Lemma~\ref{lem:rankk_op}, we get: $\|\frac{1}{B}\sum_tU^{\top}\bw_t\bz_t^{\top}\|_2\leq \sigma\sqrt{\frac{C_1k\log(T)}{B}}$. Hence, w.p. $1-2C/T$,{\small
\begin{equation}
  \|\Lambda(\frac{1}{B}\sum_t \bz_t\bw_t^{\top}U)U^{\top}\Qt\bv\|+\|(\frac{1}{B}\sum_t U^{\top}\bw_t\bz_t)\Lambda U^{\top}\Qt\bv\|\leq 2\sigma\sqrt{\frac{C_1k\log(T)}{B}}\|U^{\top}\Qt\bv\|_2. \label{eq:k_lb4}
\end{equation}}
\noindent {\bf (d)}: Finally, we consider the last term in \eqref{eq:k_lb1}. Note that, $(\Lambda\bz_t+U^{\top}\bw_t) \sim N(0, D)$ where $D$ is a diagonal matrix with $D_{ii}=\lambda_i^2+\sigma^2$. Also, $Q^{\top}U_\perp U^{\top}_\perp\bw_t \sim N(0, \sigma^2 I_{(p-k)\times (p-k)})$ and is independent of $(\Lambda\bz_t+U^{\top}\bw_t)$ as $E[Q^{\top}U_\perp U^{\top}_\perp\bw_t\bw_t^{\top}U]=0$; recall that for Gaussian RVs,  covariance is zero iff RVs are independent. Hence, using Lemma~\ref{lem:rankk_op}, w.p. $\geq 1-C/T$: 
\begin{equation}
  \|(\frac{1}{B}\sum_t(\Lambda\bz_t+U^{\top}\bw_t)\bw_t^{\top}U_\perp U_\perp^{\top}\Qt)\bv\|_2\leq \sqrt{1+\sigma^2}\sigma\sqrt{\frac{C_1k\log(T)}{B}}. \label{eq:k_lb5}
\end{equation}
Now, using \eqref{eq:k_lb1}, \eqref{eq:k_lb2}, \eqref{eq:k_lb3}, \eqref{eq:k_lb4}, \eqref{eq:k_lb5} (w.p. $\geq 1-5C/T$)
\begin{multline}
\|U^{\top}\Ftt\Qt\bv\|_2\geq   \|(\Lambda^2+\sigma^2I)U^{\top}\Qt\bv\|_2-\sqrt{\frac{C_1k\log(T)}{B}}\|U^{\top}\Qt\bv\|_2\left((1+\sigma)^2+\frac{\sigma\sqrt{1+\sigma^2}}{\|U^{\top}\Qt\bv\|_2}\right). \label{eq:k_lb6}
\end{multline}
Now, $\|U^{\top}\Qt\bv\|_2\geq\sigma_k(U^{\top}\Qt\bv)$. Next, by using the inductive hypothesis (i.e., $\sigma_k(U^T\Qt)\geq \sigma_k(U^TQ_{\tau-1})$, induction step follows as we show that the error decreases at each step) and Lemma~\ref{lem:rankk_init}, we have $\|U^{\top}\Qt\bv\|_2\geq \sigma_k(U^{\top}Q_0)\geq \frac{C}{\sqrt{pk}}$ with probability $\geq 0.99$. 

Also,  $\|(\Lambda^2+\sigma^2I)U^{\top}\Qt\bv\|_2\geq (\lambda_k^2+\sigma^2)\|U^{\top}\Qt\bv\|_2. $ Additionally, $\|U^{\top}\Qt\bv\|_2\geq \sqrt{1-\|U_\perp^{\top}\Qt\|_2^2}$. Hence, lemma follows by using these facts with \eqref{eq:k_lb6} and by selecting $B$ as given in Theorem~\ref{thm:rankk}. 
\end{proof}

\section{Proof of Lemma~\ref{lem:rankk_ub}}
\label{app:rankk_ub}
\begin{proof}
	Similar to our proof for Lemma~\ref{lem:rankk_lb}, we separate out the ``error''  or deviation terms in $\|U_\perp^{\top}\Ftt\Qt\|_2$ and bound them using concentration bounds. Now, 
\begin{align}\|U_\perp^{\top}\Ftt\Qt\bv\|_2&=\|U_\perp^{\top}(U\Lambda^2U^{\top}+\sigma^2 I + E_\tau)\Qt \bv\|_2\nonumber\\
&\leq \|\sigma^2U_\perp^{\top}\Qt\bv\|_2+\|U_\perp^{\top}E_\tau\Qt\bv\|_2\nonumber\\
&\leq \sigma^2\|U_\perp^{\top}\Qt\bv\|_2+\|E_\tau\|_2,\label{eq:k_ub1}
\end{align} 
where $E_\tau$ is the error matrix representing deviation of the estimate $\Ftt$ from its mean. That is, 
\begin{align}
  E&=\frac{1}{B}\sum_t\bx_t\bx_t^{\top}-U\Lambda^2U^{\top}-\sigma^2I\nonumber\\
&=U\Lambda (\frac{1}{B}\sum_t \bz_t\bz_t^{\top}-I) \Lambda U^{\top} + (\frac{1}{B}\sum_t \bw_t\bw_t^{\top}-\sigma^2 I)\nonumber\\
&+U\Lambda\frac{1}{B}\sum_t\bz_t\bw_t^{\top}+\frac{1}{B}\sum_t \bw_t\bz_t^{\top}\Lambda U. \label{eq:k_ub2}
\end{align}
Note that the above given four terms correspond to similar four terms in \eqref{eq:k_lb1} and hence can be bounded in similar fashion. In particular, the following holds with probability $1-4C/T$: 
\begin{equation}
  \|E\|_2\leq \sqrt{\frac{C_1k\log(T)}{B}}+\sigma^2\sqrt{\frac{C_1p\log(T)}{B}}+2\sigma\sqrt{\frac{C_1p\log(T)}{B}}\leq \lambda_k^2\epsilon/2,\label{eq:k_ub3}
\end{equation}
where the second inequality follows by setting $B$ as required by Theorem~\ref{thm:rankk}. The lemma now follows using \eqref{eq:k_ub1}, \eqref{eq:k_ub2}, \eqref{eq:k_ub3}. 
\end{proof}

\section{Proof of Lemma \ref{lem:rankk_init}}

\begin{proof} 
Using Step 2 of Algorithm~\ref{algo:rank1}: $H=Q_0R_0$. Let  $\bv_k$ be the singular vector of $U^{\top}Q_0$ corresponding to the smallest singular value. Then, 
\begin{align}
  \sigma_k(U^{\top}Q_0)&=\frac{\|U^{\top}Q_0R_0R_0^{-1}\bv_k\|_2}{\|R_0^{-1}\bv_k\|_2}\|R_0^{-1}\bv_k\|_2\nonumber\\
&\geq \sigma_k(U^{\top}Q_0R_0)\sigma_k(R_0^{-1}). \label{eq:k1_init1}
\end{align}

Now, $\sigma_k(R_0^{-1})=\frac{1}{\|R_0\|_2}=\frac{1}{\|Q_0R_0\|_2}=\frac{1}{\|H\|_2}$. Note that $\|H\|_2$ is the spectral norm of a random matrix with i.i.d. Gaussian entries and hence can be easily bounded using standard results. In particular, using Lemma~\ref{lem:smax_vershynin}, we get: $\|H\|_2\leq C_1\sqrt{p}$ w.p.  $\geq 1-e^{-C_2p}$, where $C_1, C_2>0$ are global constants. 

By Theorem 1.1 of \cite{rudelson_smallest_2009} (see Lemma~\ref{lem:smin_vershynin}), w.p. $\geq 0.99$, $\sigma_k(U^{\top}Q_0R_0)=\sigma_k(H)\geq C/\sqrt{k}$. The lemma now follows using the above two bounds with \eqref{eq:k1_init1}. 
\end{proof}

\end{document}